\def\vx{{\bm{x}}}
\DeclareMathAlphabet{\mathsfit}{\encodingdefault}{\sfdefault}{m}{sl}
\SetMathAlphabet{\mathsfit}{bold}{\encodingdefault}{\sfdefault}{bx}{n}
\def\gI{{\mathcal{I}}}
\def\sP{{\mathbb{P}}}
\theoremstyle{plain}
\newtheorem{theorem}{Theorem}[section]
\newtheorem{lemma}[theorem]{Lemma}
\theoremstyle{definition}
\newtheorem{example}[theorem]{Example}
\theoremstyle{remark}
\newtheorem{remark}[theorem]{Remark}
\newcommand{\eg}{{\em e.g.,~}}           
\newcommand{\ie}{{\em i.e.,~}}
\newcommand{\bzeta}{\mbox{\boldmath $\zeta$}}
\newcommand{\bea}{\begin{eqnarray*}}
\newcommand{\eea}{\end{eqnarray*}}
\newcommand{\ba}{\begin{eqnarray*}}
\newcommand{\ea}{\end{eqnarray*}}
\newcommand{\be}{\begin{equation}}
\newcommand{\ee}{\end{equation}}
\newcommand{\bi}{\begin{itemize}}
\newcommand{\ei}{\end{itemize}}
\newcommand*\rel@kern[1]{\kern#1\dimexpr\macc@kerna}
\newcommand*\widebar[1]{%
  \begingroup
  \def\mathaccent##1##2{%
    \rel@kern{0.8}%
    \overline{\rel@kern{-0.8}\macc@nucleus\rel@kern{0.2}}%
    \rel@kern{-0.2}%
  }%
  \macc@depth\@ne
  \let\math@bgroup\@empty \let\math@egroup\macc@set@skewchar
  \mathsurround\z@ \frozen@everymath{\mathgroup\macc@group\relax}%
  \macc@set@skewchar\relax
  \let\mathaccentV\macc@nested@a
  \macc@nested@a\relax111{#1}%
  \endgroup
}
\newcommand{\ours}{{\texttt{FedDRM}}}
\title{Beyond Aggregation: Guiding Clients in Heterogeneous Federated Learning}
\author{Zijian Wang$^{1}$ ~~~~~ Xiaofei Zhang$^{2}$ ~~~~~ Xin Zhang$^{3}$ ~~~~~ Yukun Liu$^{4}$ ~~~~~ Qiong Zhang$^{1}$\thanks{Correspondence to: Qiong Zhang (\texttt{qiong.zhang@ruc.edu.cn}).} \\
$^{1}$Renmin University of China ~~~~~ $^{2}$Zhongnan University of Economics and Law\\
$^{3}$Meta ~~~~~ $^{4}$East China Normal University
}
\begin{document}

\maketitle

\begin{abstract}
Federated learning (FL) is increasingly adopted in domains like healthcare, where data privacy is paramount. 
A fundamental challenge in these systems is statistical heterogeneity—the fact that data distributions vary significantly across clients (e.g., different hospitals may treat distinct patient demographics). 
While current FL algorithms focus on aggregating model updates from these heterogeneous clients, the potential of the central server remains under-explored.
This paper is motivated by a healthcare scenario: could a central server not only coordinate model training but also guide a new patient to the hospital best equipped for their specific condition? We generalize this idea to propose a novel paradigm for FL systems where the server actively guides the allocation of new tasks or queries to the most appropriate client.
To enable this, we introduce a density ratio model and empirical likelihood-based framework that simultaneously addresses two goals: (1) learning effective local models on each client, and (2) finding the best matching client for a new query. 
Empirical results demonstrate the framework's effectiveness on benchmark datasets, showing improvements in both model accuracy and the precision of client guidance compared to standard FL approaches. This work opens a new direction for building more intelligent and resource-efficient FL systems that leverage heterogeneity as a feature, not just a bug. 
Code is available at \url{https://github.com/zijianwang0510/FedDRM.git}.
\end{abstract}

\section{Introduction}
Federated learning (FL) has emerged as a powerful paradigm for training machine learning models across distributed data sources without sharing raw data. 
By enabling clients such as hospitals, financial institutions, or mobile devices to collaboratively train models under the coordination of a central server, FL offers a practical solution for privacy-preserving learning in sensitive domains~\citep{li2020review,long2020federated,xu2021federated}. 

\begin{wrapfigure}{r}{0.35\textwidth}
\vspace{-0.4cm}
\centering
\includegraphics[width=\linewidth]{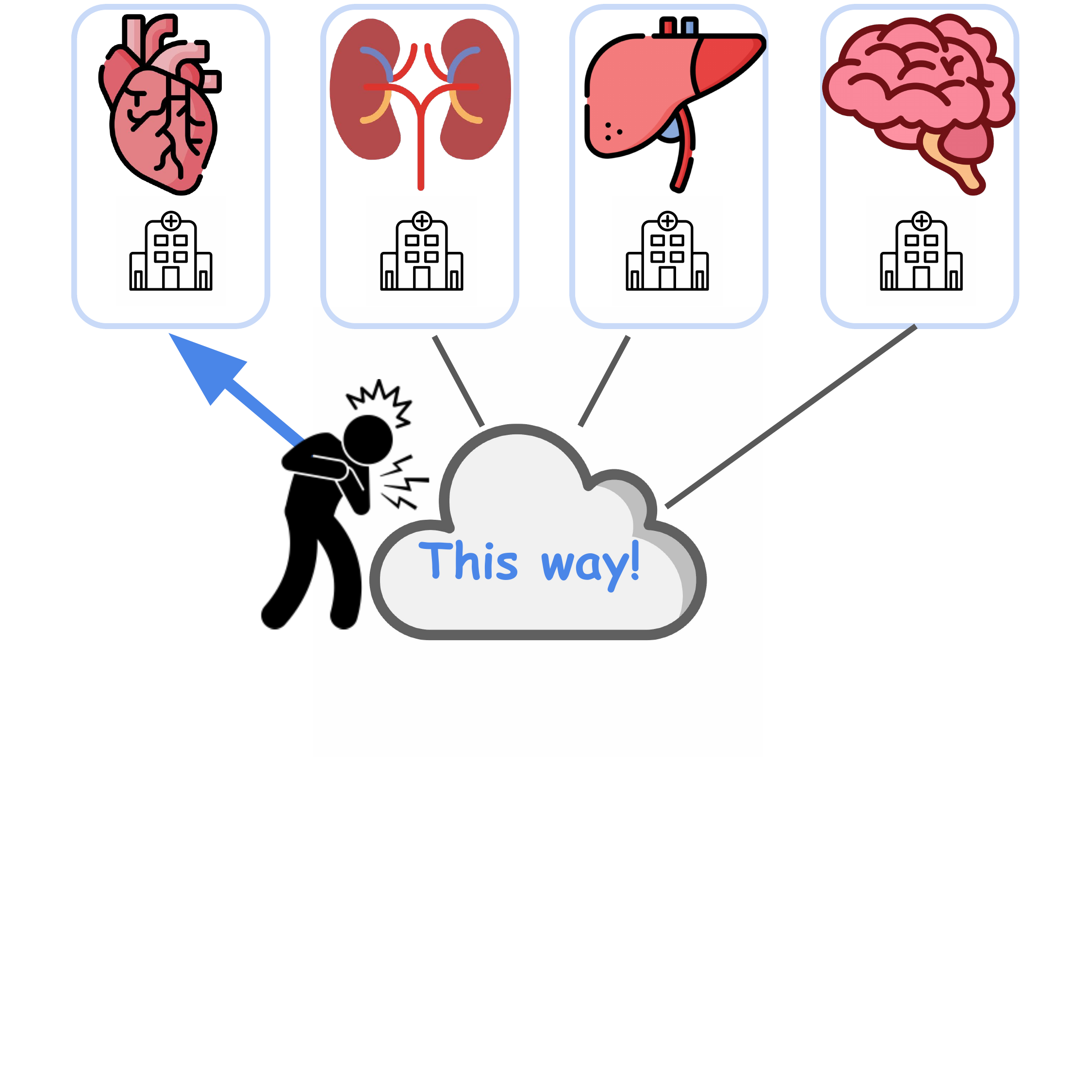}
\caption{\textbf{FL server as an intelligent router}: Leveraging learned data distributions to direct queries to the most specialized client, rather than applying a global model for diagnosis.}
\label{fig:main_illustration}
\vspace{-0.8cm}
\end{wrapfigure}
A key challenge in applying FL in practice is statistical heterogeneity: clients often hold data drawn from different, non-identically distributed populations. In healthcare, hospitals may serve distinct patient demographics; in finance, banks may encounter different fraud patterns; and on mobile devices, user behavior varies widely. 
Such heterogeneity can cause local models to drift apart, leading to slower convergence~\citep{li2020federated}, biased updates~\citep{karimireddy2020scaffold}, and global models that underperform when applied back to individual clients~\citep{t2020personalized}. 
To address these issues, most FL systems suppress heterogeneity through aggregation corrections, client reweighting, or personalization, leaving the server largely passive as an aggregator of local updates. We argue this misses a key opportunity: the server should actively exploit heterogeneity rather than merely mitigate it.

Consider a healthcare scenario: different hospitals may excel at treating different patient groups depending on their location and/or expertise. 
When a new patient arrives, instead of merely deploying a global model for diagnosis, the server could help identify the hospital best equipped to provide care, leveraging local data distributions to capture specialized expertise. 
A cartoon illustration of this scenario is given in Fig.~\ref{fig:main_illustration}.
Similar opportunities exist in other domains: in finance, the server could direct a fraud detection query to the bank whose historical data best matches the transaction profile; in personalized services, it could route a query to the client with the most relevant user base.
These examples illustrate that statistical heterogeneity across clients—often seen as an obstacle—can instead become a valuable resource. 
They motivate the central insight of our work:
\begin{center}
\begin{tcolorbox}[colframe=black!50!white, 
    colback=gray!5!white, 
    boxrule=0.5mm, 
    arc=5pt,width=14cm]
\centering
% \vspace{-0.15cm}
\textit{Beyond coordinating training, the server can actively exploit client heterogeneity--transforming it from a challenge into a resource by guiding new queries to the most suitable client.}
\end{tcolorbox}
\end{center}
Much of the existing work in FL has focused on mitigating the challenges of statistical heterogeneity, without using the server for guiding new queries.
One major line of research develops aggregation algorithms to reduce the bias induced by non-identically distributed data. 
Examples include methods that modify local updates before aggregation~\citep{gao2022feddc,guo2023fedbr,zhang2023fedala}, reweight client contributions~\citep{wang2020tackling,yin2024tackling}, or introduce regularization terms to align local objectives with the global one~\citep{li2020federated,acar2021federated,li2021model}.
These approaches aim to learn a single global model that performs reasonably well across all clients, but they do not leverage heterogeneity as an asset.
A second line of work explores personalization in FL. 
Rather than enforcing a universal global model, personalization methods adapt models to each client’s local distribution~\citep{li2021fedbn}, often through fine-tuning~\citep{t2020personalized,collins2021exploiting,tan2022towards, ma2022layer}, multi-task learning~\citep{smith2017federated,li2021ditto}, or meta-learning~\citep{fallah2020personalized}.
While these approaches improve local performance, they are typically not designed to address the challenge of guiding new queries or tasks to the most appropriate client.
Another related direction is client clustering~\citep{ghosh2020efficient,li2021federated,briggs2020federated,kim2021dynamic,long2023multi}, where clients with similar data distributions are grouped and trained jointly within each cluster. 
This can improve performance under heterogeneity, but still assumes the server’s role is limited to coordinating training and distributing models, rather than supporting query routing or task allocation.
\emph{Overall, while these approaches are effective for their intended goals, they stop short of enabling the server to actively guide new queries to the most suitable client}.

Motivated by this gap, we introduce a new paradigm in which the FL server not only coordinates training but also learns to guide each incoming query to the client best suited to handle it. 
Achieving this goal requires two capabilities: (i) effective information sharing across clients despite heterogeneity, and (ii) a principled way to quantify how each client’s data distribution differs from the others so that queries can be meaningfully matched.
To achieve this, we develop \ours{}, a unified framework grounded in density ratio model (DRM)~\citep{anderson1979multivariate} and empirical likelihood (EL)~\citep{owen2001empirical}. DRM represents each client’s distribution as a \emph{multiplicative density tilt of a baseline distribution}, while EL facilitates nonparametric model learning, enabling the estimation of this baseline distribution in a data-driven manner without parametric assumptions. After profiling out the baseline distribution, the resulting objective decomposes into two interpretable cross-entropy components: one for predicting class labels and another for identifying a sample’s client of origin. 
The first supports standard FL training; the second supplies precisely the signal needed for query-to-client routing, enabling the server to exploit—rather than suppress—statistical heterogeneity.

This formulation leads to three key contributions.
\emph{First}, we propose the first statistically grounded FL framework that jointly learns heterogeneous predictive models and the distributional structure required for query routing within a single principled objective.
\emph{Second}, we develop a new algorithmic correction for the classification component of the EL objective. 
Because each client is associated with only a single class label for client identification, the vanilla loss suffers from an extreme form of label shift; we propose a simple yet effective reweighting adjustment that yields a more stable classifier.
\emph{Third}, through experiments on benchmark datasets, we demonstrate that our approach consistently improves both predictive accuracy and routing precision compared to standard FL methods, underscoring the benefits of integrating guidance directly into the FL workflow.
Together, these developments transform the FL server from a passive aggregator into an intelligent router capable of directing queries to the most suitable client, opening the door to FL systems that are not only privacy-preserving but also adaptive and expertise-aware.

\section{\ours{}: Guiding Clients in Heterogeneous FL}

%%%%%%%%%%%%%%%%%%%%%%%%%%%%%%%
\subsection{Probabilistic description of statistical heterogeneity}
Consider an FL system with $m$ clients.
Let $\mathcal{D}_i \coloneqq \{(X_{ij}, Y_{ij})\}_{j=1}^{n_i}$ denote the training set on the $i$-th client, where each sample is drawn independently from $P_{X,Y}^{(i)}$.
We consider the multi-class classification case where $Y_{ij}\in [K]\coloneqq\{1,\ldots, K\}$ with marginal distribution $\mathbb{P}(Y_{ij}=k) = \pi_{ik}$ for $k\in [K]$, and features conditioned on the labels are distributed as $X_{ij}|(Y_{ij}=k)\sim P_{k}^{(i)}$.
We denote the marginal distribution of the features on client $i$ as $P_{X}^{(i)}$, and the conditional distribution of $Y$ given $X=x$ as $\{\mathbb{P}^{(i)}(Y=k|X=x)\}_{k=1}^{K}$.
Different types of statistical heterogeneity can be described in terms of the family of distributions $\{P_{X,Y}^{(i)}\}_{i=1}^m$:
\begin{itemize}[leftmargin=*]
    \item \textbf{Covariate shift}: Clients differ in their marginal feature distributions while sharing the same conditional label distribution. 
    In our notation, this corresponds to 
    \[
    P_{X}^{(i)} \neq P_{X}^{(i')} \text{ for } i \neq i', \text{ but } \mathbb{P}^{(i)}(Y=k |X=x) = \mathbb{P}^{(i')}(Y=k |X=x) \text{ for all $x$ and $k$}.
    \]

    \item \textbf{Label shift}: Clients have different label marginals but share the same conditional feature distributions given the label. 
    Equivalently,
    \[
    \pi_{i}\coloneqq(\pi_{i1},\ldots,\pi_{iK}) \neq \pi_{i'}\coloneqq(\pi_{i'1},\ldots,\pi_{i'K}) \text{ for some } i \neq i', 
    \text{ but } P_{k}^{(i)} = P_{k}^{(i')} \text{ for all } k.
    \]
\end{itemize}
In practice, real-world FL systems often exhibit combinations of these shifts, which leads to the \textbf{full distributional shift} where both $\pi_i$ and $\{P_{k}^{(i)}\}_{k=1}^K$ may vary across clients. 
% In the sequel, we focus on covariate and/or label shift depending on the modeling assumptions.

%%%%%%%%%%%%%%%%%%%%%%%%%%%%%%%%%%%%%%%%%%%%%%%%%%%%%%%%%%%%%%
\subsection{A semiparametric density ratio model}
For clarity, we begin with the special case of \emph{covariate shift} across clients. 
Extensions to other types of heterogeneity will then follow naturally.
Let $g_{\theta}(x)$ represent a feature embedding (\eg an embedding from a DNN parameterized by $\theta$) \emph{s.t.} the conditional distribution of $Y|X$ is given by:
\begin{equation}
\label{eq:y_given_x}
\mathbb{P}(Y=k|X=x) =\frac{\exp(\alpha_{k} + \beta_k^{\top}g_{\theta}(x))}{\sum_{j}\exp(\alpha_{j} + \beta_j^{\top}g_{\theta}(x))}.
\end{equation}
% For identifiability, we set $\alpha_{1}=0$ and $\beta_1=0$.
We drop the superscript $(i)$ since this conditional distribution remains the same across all clients under covariate shift.  
Applying Bayes' rule to \eqref{eq:y_given_x}, we derive that the class-conditional distributions are connected by an exponential function:
\begin{equation}
\label{eq:same_client_drm}
dP_k^{(i)}/dP_1^{(i)}(x) =\exp(\alpha^{\dagger}_{ik} + \beta_k^{\top}g_{\theta}(x))
\end{equation}
where $dP_{k}^{(i)}/dP_{1}^{(i)}$ denotes the Radon–Nikodym derivative of $dP_{k}^{(i)}$ with respect to $dP_{1}^{(i)}$ and $\alpha^{\dagger}_{ik} = \alpha_{k} + \log (\pi_{i1}/\pi_{ik})$ for $i\in[m]$.

To facilitate knowledge transfer across clients in FL, we assume their datasets share some common underlying statistical structure. 
Specifically, we relate the client distributions $\{P_1^{(l)}\}_{l=1}^m$ through a hypothetical reference measure $P_1^{(0)}$ at the server, using DRM\footnote{
DRM provides a framework for modeling the relationship between two or more populations that share similar characteristics. 
It is highly flexible and encompasses several commonly used parametric distribution families---such as the binomial, exponential, and normal families---as special cases~\citep{kay1987transformations}.}~\citep{anderson1979multivariate}:
\begin{equation}
\label{eq:covariate_shift}
dP_k^{(i)}/dP_1^{(i)}(x) = \exp(\gamma_i + \xi_i^{\top}h_{\tau}(g_{\theta}(x)))
\end{equation}
where $h_{\tau}(\cdot)$ is a parametric function with parameters $\tau$.
We refer to $P_1^{(0)}$ as a \emph{hypothetical} reference since the server may not have data directly, although the formulation also applies when server-side data are available.
The DRM captures differences in the conditional distributions of $X|Y=1$ across clients via density ratios, with log-ratios modeled linearly in the embeddings.  
This avoids estimating each distribution separately, focusing instead on relative differences.  
When the covariate shift is not too severe, the marginal distributions of different clients are connected through this parametric form, making FL effective by leveraging shared structure across clients.  
On the other hand, if the distributions differ too drastically, combining data from different clients is unlikely to improve performance; in this case, even if the DRM assumption does not hold, it is not a limitation of the formulation but a consequence of the inherent nature of the problem.  
Thus, the assumption is reasonable in practice.  
When $\gamma_i = 0$ and $\xi_i = 0$, \eqref{eq:covariate_shift} reduces to the IID case.  
Under this assumption, we obtain the following relationship between the marginal feature distributions:
\begin{theorem}
\label{thm:marginal_drm}
With~\eqref{eq:same_client_drm} and~\eqref{eq:covariate_shift}, the marginal distributions of $X$ also satisfy the DRM:
\begin{equation}
\label{eq:marginal_drm} 
dP_X^{(i)}/dP_X^{(0)}(x) = \exp\{\gamma_i^{\dagger} + \xi_i^{\top}h_{\tau}(g_{\theta}(x))\}
\end{equation}
where $\gamma_i^{\dagger} = \gamma_i + \log(\pi_{i1}/\pi_{01})$ for all $i\in[m]$, and $P_{X}^{(0)}$ an unspecified reference measure.     
\end{theorem}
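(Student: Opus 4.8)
The plan is to obtain the marginal density ratio by a chain-rule decomposition through the class-$1$ conditionals, namely
\[
\frac{dP_X^{(i)}}{dP_X^{(0)}}
= \frac{dP_X^{(i)}}{dP_1^{(i)}}\cdot \frac{dP_1^{(i)}}{dP_1^{(0)}}\cdot \frac{dP_1^{(0)}}{dP_X^{(0)}},
\]
where the middle factor is exactly the client-to-reference DRM \eqref{eq:covariate_shift}. Thus the whole argument reduces to identifying the two ``marginalization'' factors $dP_X^{(i)}/dP_1^{(i)}$ and $dP_X^{(0)}/dP_1^{(0)}$, i.e.\ to expressing each client's feature marginal in terms of its own class-$1$ conditional. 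For this I would first write the marginal as the label mixture $dP_X^{(i)} = \sum_{k\in[K]} \pi_{ik}\, dP_k^{(i)}$ and then replace each class-conditional using \eqref{eq:same_client_drm}.

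The key computation is the following simplification. Substituting \eqref{eq:same_client_drm} gives
\[
\frac{dP_X^{(i)}}{dP_1^{(i)}}(x) = \sum_{k\in[K]} \pi_{ik}\exp\!\big(\alpha^{\dagger}_{ik} + \beta_k^{\top}g_{\theta}(x)\big),
\]
and the crucial observation is that the definition $\alpha^{\dagger}_{ik} = \alpha_k + \log(\pi_{i1}/\pi_{ik})$ makes the label weights collapse: $\pi_{ik}\exp(\alpha^{\dagger}_{ik}) = \pi_{i1}\exp(\alpha_k)$. Hence the sum factorizes as $\pi_{i1}\, D(x)$ with $D(x) \coloneqq \sum_{k\in[K]}\exp(\alpha_k + \beta_k^{\top}g_{\theta}(x))$, and---this is the point that makes the theorem work---$D(x)$ is precisely the softmax normalizer of \eqref{eq:y_given_x} and therefore carries no dependence on the client $i$ (this is where covariate shift, i.e.\ a shared $\sP(Y=k\mid X=x)$, is used). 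Applying the identical calculation to the reference client, whose label marginal is $\bpi_0 = (\pi_{01},\ldots,\pi_{0K})$ and which shares the same conditional structure, yields $dP_X^{(0)}/dP_1^{(0)} = \pi_{01} D(x)$; one can check $\int \pi_{01}D\, dP_1^{(0)} = \sum_{k\in[K]} \pi_{0k} = 1$, so this indeed defines the reference feature marginal.

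Assembling the three factors, the client-independent normalizer cancels, $D(x)/D(x)=1$, and the surviving prefactor $\pi_{i1}/\pi_{01}$ turns into the additive offset $\log(\pi_{i1}/\pi_{01})$ inside the exponent. Combined with $\exp(\gamma_i + \xi_i^{\top}h_{\tau}(g_{\theta}(x)))$ from \eqref{eq:covariate_shift}, this produces exactly \eqref{eq:marginal_drm} with $\gamma_i^{\dagger} = \gamma_i + \log(\pi_{i1}/\pi_{01})$. The main obstacle I anticipate is not the algebra but the measure-theoretic bookkeeping: one must ensure mutual absolute continuity along the chain $P_X^{(i)} \ll P_1^{(i)} \ll P_1^{(0)}$ and $P_1^{(0)} \ll P_X^{(0)}$ so that the Radon--Nikodym factorization is legitimate (the mixture representation forces care at degenerate labels $\pi_{ik}=0$, which should be excluded), and one must make the ``unspecified reference measure'' $P_X^{(0)}$ precise by positing that client $0$ obeys the same $\sP(Y=k\mid X=x)$ so that its normalizer is the same $D(x)$; without this shared conditional the cancellation fails.
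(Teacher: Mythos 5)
Your proof is correct and is essentially the paper's own argument: the paper likewise expands the marginal as a label mixture, uses the collapse $\pi_{ik}\exp(\alpha_{ik}^{\dagger})=\pi_{i1}\exp(\alpha_k)$, applies \eqref{eq:covariate_shift} to the class-$1$ conditional, and then recognizes the remaining client-independent sum as the reference marginal $p_0(x)$ — your chain-rule factorization through $P_1^{(i)}$ and $P_1^{(0)}$ is just a more modular packaging of the same three ingredients. Your closing caveats (absolute-continuity along the chain, and the implicit requirement that the reference population obeys the same conditional $\sP(Y=k\mid X=x)$ with label marginal $\bpi_0$, so that the normalizer $D(x)$ cancels) correctly flag hypotheses the paper uses silently.
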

See proof in App.~\ref{app:marginal_drm}.
This theorem relates each client’s marginal distribution to the reference distribution through a parametric tilt, which directly facilitates construction of the likelihood.
If the reference measure $P_{X}^{(0)}$ was fully specified, all $\{P_{X,Y}^{(i)}\}_{i=1}^m$ would also be fully determined, and one could estimate the unknown model parameters using a standard maximum likelihood approach. 
In practice, however, $P_{X}^{(0)}$ is unknown, and assuming it follows a parametric family risks model mis-specification and potentially biased inference.  

To address this challenge, we adopt a flexible, nonparametric approach based on EL~\citep{owen2001empirical} that integrate data across heterogeneous populations via DRM~\citep{qin1997goodness,fokianos2001semiparametric,chen2013quantile,li2017semiparametric,liu2017maximum,liu2025positive}.
EL constructs likelihood functions directly from the observed data without requiring a parametric form.  
Instead of specifying a probability model, it assigns probabilities to the observed samples and maximizes the nonparametric likelihood subject to constraints, such as moment conditions.  
Unlike classical parametric likelihood, EL adapts flexibly to the data, making it particularly suitable when the underlying distribution is unknown or complex, but valid structural or moment conditions are available.
Specifically, we let 
\[
p_{ij} = P_{X}^{(0)}(\{X_{ij}\}) \geq 0, \quad \forall i\in[m],~j\in[n_i],
\] 
treating the $p_{ij}$ as parameters. 
In this way, the reference measure $P_{X}^{(0)}$ is represented as an atomic measure without any parametric assumptions, and most importantly \emph{all samples across clients are leveraged for information sharing}. 
To ensure that $P_{X}^{(0)}$ and $\{P_{X}^{(i)}\}_{i=1}^m$ are valid probability measures, the following constraints are imposed:
\begin{equation}
\label{eq:constraints}    
\sum_{i=1}^{m}\sum_{j=1}^{n_i} p_{ij} = 1, \quad 
\sum_{i=1}^{m}\sum_{j=1}^{n_i} p_{ij}\,\exp\big\{\gamma_l^{\dagger} + \xi_l^{\top} h_{\tau}(g_{\theta}(X_{ij}))\big\} = 1, \quad \forall~l\in [m].
\end{equation}

%%%%%%%%%%%%%%%%%%%%%%%%%%%%%%%
\subsection{A surprisingly simple dual loss}
With the semiparametric DRM for heterogeneous FL, we propose a maximum likelihood approach for model learning.
Let $p=\{p_{ij}\}$, $\alpha=\{\alpha_k\}$, $\beta=\{\beta_k\}$, $\gamma^{\dagger}=\{\gamma_i^{\dagger}\}$, $\xi=\{\xi_i\}$, and $\zeta = (\alpha, \beta, \gamma,\xi, \theta,\tau)$,
the log empirical likelihood of the model based on datasets across clients is
\[
\begin{split}
\ell_{N}(p, \zeta) =&~\sum_{i,j}\log P_{X,Y}^{(i)}(\{X_{ij},Y_{ij}\})
=\sum_{i,j,k}\mathbbm{1}(Y_{ij}=k)\log \mathbb{P}(Y=k|X_{ij}) + \sum_{i,j}\log P_{X}^{(i)}(\{X_{ij}\})\\
=&~\sum_{i,j,k}\mathbbm{1}(Y_{ij}=k)\log \mathbb{P}(Y=k|X_{ij}) + \sum_{i,j}\{\gamma_i^{\dagger}+\xi_{i}^{\top}h_{\tau}(g_{\theta}(X_{ij}))+\log p_{ij}\},
\end{split}
\]
where the last equality makes use of Theorem~\ref{thm:marginal_drm}.
Since our goal is to learn~\eqref{eq:y_given_x} on each client, the weight $p$ becomes a nuisance parameter, which we profile out to learn the parameters that are connected to the conditional distribution of $Y|X=x$.
The profile log-EL of $\zeta$ is defined as $p\ell_{N}(\zeta) = \sup_{
p} \ell_{N}(p, \zeta)$ where the supremum is under constraints~\eqref{eq:constraints}.
By the method of Lagrange multiplier, we show in App.~\ref{app:profile_loglik} that an analytical form of the profile log-EL is
\begin{equation}
\label{eq:log-pel}
p\ell_{N}(\zeta)=\sum_{i,j,k}\mathbbm{1}(Y_{ij}=k)\log \mathbb{P}(Y_{ij}=k|X_{ij}) + \sum_{i,j}\{\gamma_i^{\dagger}+\xi_{i}^{\top}h_{\tau}(g_{\theta}(x_{ij})) + \log p_{ij}(\zeta)\}
\end{equation}
where 
$
{\small{
p_{ij}(\zeta)=N^{-1}\Big\{1+\sum_{l=1}^{m}\rho_{l} \left[\exp\{\gamma_l^{\dagger} + \xi_l^{\top}h_{\tau}(g_{\theta}(x_{ij}))\}-1\right]\Big\}^{-1}}}
$
and the Lagrange multipliers $\{\rho_{l}\}_{l=1}^{m}$ are the solution to
\[\sum_{i,j} \frac{\exp\{\gamma_l^{\dagger} + \xi_l^{\top}h_{\tau}(g_{\theta}(x_{ij}))\} -1}{\sum_{l'}\rho_{l'} \left[\exp(\gamma_{l'}^{\dagger} + \xi_{l'}^{\top}h_{\tau}(g_{\theta}(x_{ij})))-1\right]} = 0.\]
Although the profile log-EL in~\eqref{eq:log-pel} has a closed analytical form, computing it typically requires solving a system of $m$ equations for the Lagrange multipliers, which can be computationally demanding. Interestingly, at the optimal solution these multipliers admit a closed-form expression, yielding a surprisingly simple dual formulation of the profile log-EL presented below.
\begin{theorem}[Dual form]
For any fixed $\theta$, the Lagrange multipliers $\rho_{l} = n_l/N$ and the profile log-EL in~\eqref{eq:log-pel} becomes
\[
p\ell_{N}(\zeta)=\sum_{i,j}\log\left\{\frac{\exp(\gamma^{\ddagger}_i + \xi_i^{\top}h_{\tau}(g_{\theta}(x_{ij})))}{\sum_{l} \exp(\gamma^{\ddagger}_l + \xi_l^{\top}h_{\tau}(g_{\theta}(x_{ij})))}\right\}+\sum_{i,j}\log\left\{\frac{\exp(\alpha_{y_{ij}} + \beta_{y_{ij}}^{\top}g_{\theta}(x_{ij}))}{\sum_{k}\exp(\alpha_{k} +\beta_k^{\top}g_{\theta}(x_{ij}))}\right\}
\]
at optimality, up to some constant where $\gamma_i^{\ddagger} = \log(n_{i}/n_{1}) + \gamma_i^{\dagger}$.  
\end{theorem}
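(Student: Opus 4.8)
The plan is to treat the profiling over $\vp$ and the stationarity in $\bzeta$ as a single constrained optimization and to read off the multipliers from the envelope relations. First I would form the Lagrangian for the inner problem of maximizing $\sum_{i,j}\log p_{ij}$ subject to the constraints in~\eqref{eq:constraints}, introducing a multiplier $\lambda_0$ for the normalization $\sum_{i,j}p_{ij}=1$ and multipliers $\lambda_l$ for the $m$ tilting constraints. Stationarity in $p_{ij}$ gives $p_{ij}=\{\lambda_0+\sum_l\lambda_l\exp(\gamma_l^\dagger+\xi_l^\top h_\tau(g_\theta(X_{ij})))\}^{-1}$; multiplying by $p_{ij}$, summing over $(i,j)$, and invoking both constraints yields $\lambda_0=N-\sum_l\lambda_l$, so that $p_{ij}$ matches the form $p_{ij}(\bzeta)$ in~\eqref{eq:log-pel} under the identification $\lambda_l=N\rho_l$.

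The crux is to pin down the $\rho_l$ at the optimum of the full objective, and here I would differentiate $p\ell_N(\bzeta)$ in $\gamma_l^\dagger$. By the envelope theorem, the implicit dependence of $\vp(\bzeta)$ and of the multipliers on $\gamma_l^\dagger$ contributes nothing, so the derivative equals the partial derivative of the Lagrangian holding $\vp$ and $\lambda$ fixed. The explicit term $\sum_i n_i\gamma_i^\dagger$ contributes $n_l$, the label term and the $\xi$-term contribute nothing, and the $l$-th constraint term contributes $-\lambda_l\sum_{i,j}p_{ij}\exp(\gamma_l^\dagger+\xi_l^\top h_\tau(g_\theta(X_{ij})))=-\lambda_l$, the last equality using that the $l$-th tilting constraint equals one at the profiled optimum. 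Setting the total to zero gives $\lambda_l=n_l$, i.e. $\rho_l=n_l/N$. I expect this to be the main obstacle: it requires justifying the envelope reduction (the inner problem is a concave maximization over a convex feasible set, so the maximizer is unique and varies smoothly with $\bzeta$) and carefully using the \emph{active} constraint value to collapse the constraint term to $-\lambda_l$ rather than leaving an unresolved sum.

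With $\rho_l=n_l/N$ in hand, the remainder is algebra. Substituting into $p_{ij}(\bzeta)$ and using $N=\sum_l n_l$ collapses the denominator, since $N+\sum_l n_l(\exp(\cdot)-1)=\sum_l n_l\exp(\gamma_l^\dagger+\xi_l^\top h_\tau(g_\theta(X_{ij})))$, giving $p_{ij}=\{\sum_l n_l\exp(\gamma_l^\dagger+\xi_l^\top h_\tau(g_\theta(X_{ij})))\}^{-1}$. I would then plug~\eqref{eq:y_given_x} into the label term of~\eqref{eq:log-pel}, which immediately produces the second (label cross-entropy) factor $\log\{\exp(\alpha_{y_{ij}}+\beta_{y_{ij}}^\top g_\theta(x_{ij}))/\sum_k\exp(\alpha_k+\beta_k^\top g_\theta(x_{ij}))\}$. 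For the remaining term $\gamma_i^\dagger+\xi_i^\top h_\tau(g_\theta(x_{ij}))+\log p_{ij}$ I would introduce $\gamma_i^\ddagger=\log(n_i/n_1)+\gamma_i^\dagger$; since $\exp(\gamma_l^\ddagger)=(n_l/n_1)\exp(\gamma_l^\dagger)$, the normalizer rewrites as $\log\sum_l\exp(\gamma_l^\ddagger+\cdot)=-\log n_1+\log\sum_l n_l\exp(\gamma_l^\dagger+\cdot)$, and collecting terms shows this factor equals $\log\{\exp(\gamma_i^\ddagger+\xi_i^\top h_\tau(g_\theta(x_{ij})))/\sum_l\exp(\gamma_l^\ddagger+\xi_l^\top h_\tau(g_\theta(x_{ij})))\}-\log n_i$. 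Summing over $(i,j)$ leaves the client-identification cross-entropy plus the constant $-\sum_i n_i\log n_i$, which is precisely what the phrase ``up to some constant'' absorbs, completing the proof.
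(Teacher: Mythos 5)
Your proposal is correct and follows essentially the same route as the paper: profile out $\vp$ via the inner Lagrangian, pin down the multipliers by setting $\partial p\ell_N/\partial\gamma_l^\dagger=0$ and using the active tilting constraint to get $\rho_l=n_l/N$, then substitute and reparameterize via $\gamma_i^\ddagger$ to obtain the two cross-entropy terms plus a constant. The only cosmetic difference is that you invoke the envelope theorem to discard the implicit dependence of $\vp$ and the multipliers on $\gamma_l^\dagger$, whereas the paper verifies this cancellation explicitly by showing the $\partial\rho_{l'}/\partial\gamma_l^\dagger$ terms vanish under the constraints~\eqref{eq:constraints}---the same mechanism stated abstractly versus computed by hand.
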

See App.~\ref{app:profile_log_EL_dual} for proof.
The theorem allows us to define the overall loss function as the negative profile log-EL:
\begin{center}
\begin{tcolorbox}[colframe=RoyalBlue!50!white, 
    colback=RoyalBlue!5!white, 
    boxrule=0.5mm, 
    arc=5pt,width=14cm, height=1.2cm]
\centering
\vspace{-0.4cm}
\begin{equation}
\label{eq:loss}
\ell(\zeta) = -p\ell_{N}(\zeta)= \sum_{i,j} \ell_{\text{CE}}(i, h_{\tau}(g_{\theta}(x_{ij}));\gamma,\xi) + \sum_{i,j} \ell_{\text{CE}}(y_{ij}, g_{\theta}(x_{ij});\alpha,\beta),
\end{equation}
\end{tcolorbox}
\end{center}

\begin{wrapfigure}{r}{0.35\textwidth}
\vspace{-0.6cm}
\centering
\includegraphics[width=\linewidth]{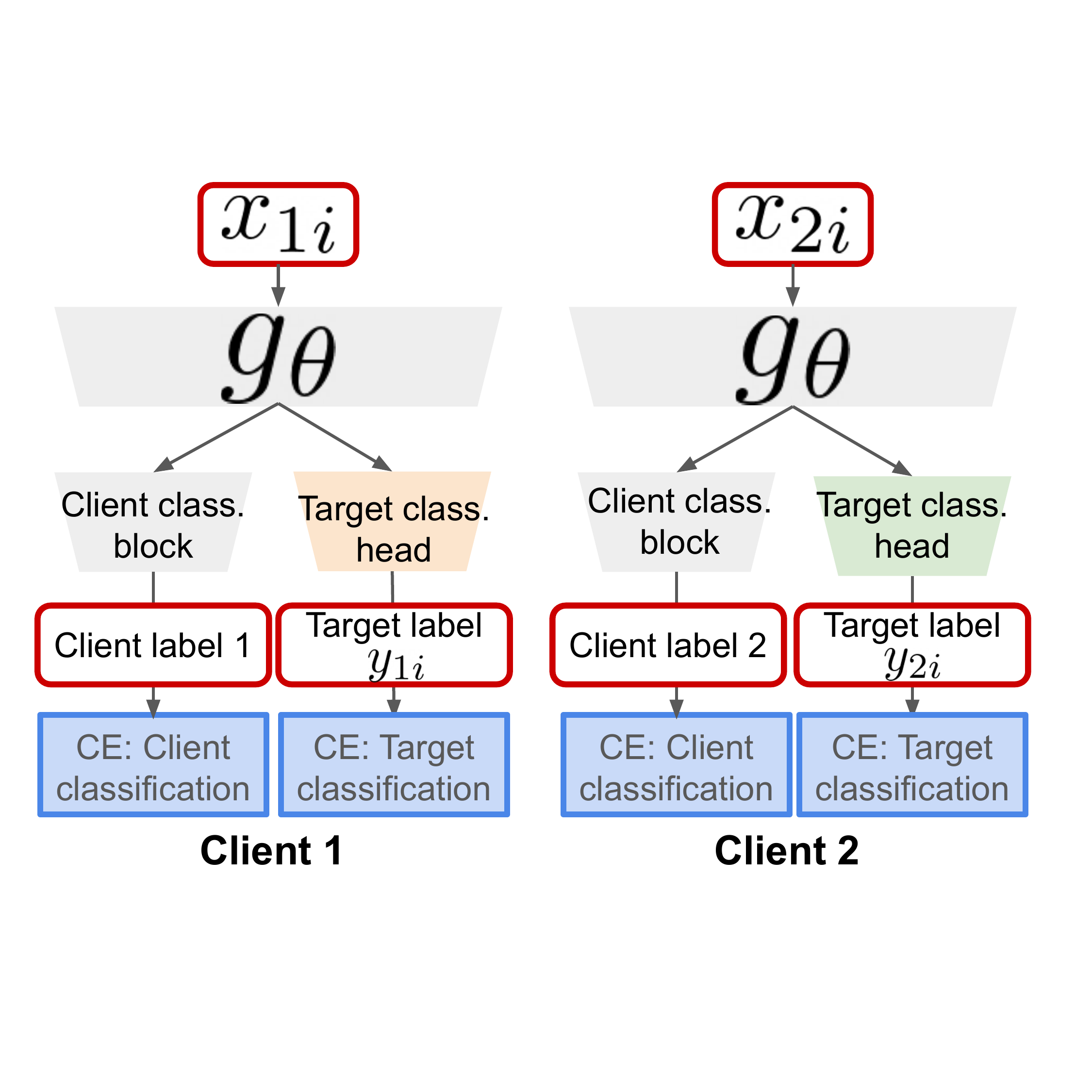}
\vspace{-0.6cm}
\caption{\textbf{Network architecture}. Gray blocks are shared among all clients, while colored blocks are specific to each client.}
\label{fig:architecture}
\vspace{-0.45cm}
\end{wrapfigure}
where $\ell_{\text{CE}}(y,x;\alpha,\beta) = -(\alpha_{y} + \beta_{y}^{\top}x)+\log\{\sum_{k}\exp(\alpha_{k} + \beta_{k}^{\top}x)\}$ is the cross-entropy loss.

\begin{remark}[\textbf{Beyond covariate shift}]
Our method is described under covariate shift. 
The derivations in key steps~\eqref{eq:same_client_drm} and~\eqref{eq:covariate_shift} do not require the marginal distribution of $Y$ to be identical across clients, which allows us to also accommodate label shift. 
Importantly, we show that \emph{our approach extends to the more general setting where both $Y|X$ and $X$ differ across clients} in App.~\ref{app:generalization}. 
In this case, after a detailed derivation, we find that the overall loss simplifies to a minor adjustment in the target-class classification head. 
Concretely, the target-class classification loss is equipped with a client-specific linear head, resulting in the final architecture shown in Fig.~\ref{fig:architecture}. 
Interestingly, this architecture closely resembles those in personalized FL methods such as~\citep{collins2021exploiting}: target-class classification is performed with client-specific heads, while our new client classification component relies on a single shared head across all clients.
\end{remark}

\begin{remark}[\textbf{Guiding new queries}]
Although the derivation is mathematically involved, the resulting loss function is remarkably simple: it consists of two cross-entropy terms, each associated with a distinct classification task. 
The first term identifies the client from which a sample originates, while the second predicts its target class. 
The additional client-classification head produces, for each query, a predictive distribution over clients according to~\eqref{eq:marginal_drm}.
By routing a query to the client with the highest predicted probability, we obtain a principled mechanism for assigning new data to the client best equipped to handle it.
\end{remark}

%%%%%%%%%%%%%%%%%%%%%%%%%%%%%%%
\subsection{Optimization algorithm}
The overall loss $\ell(\zeta)$ in~\eqref{eq:loss} is defined as if all datasets were pooled together.
Since optimizing $\ell(\zeta)$ with vanilla SGD and weight decay is equivalent to minimizing a loss function with an explicit $L_2$ penalty, we denote the loss as $\ell^{\rho}(\zeta)=\ell(\zeta) + (\rho/2)\|\zeta\|_2^2$, with minimizer $\tilde\zeta_N$.
The subscript $N$ is used to indicate that this weight is based on $N$ samples.
In the FL setting,  the global loss decomposes naturally into client-specific contributions:  $\ell^{\rho}(\zeta) = \sum_{i=1}^{m} (n_i/N)\ell_i(\zeta)$ where 
\[
\ell_i(\zeta) = \ell_i(\gamma, \xi) + \ell_i(\alpha, \beta) + (\rho/2)\|\zeta\|_2^2,
\]
$\ell_i(\gamma, \xi) = n_i^{-1}\sum_j \ell_{\text{CE}}(i, h_{\tau}(g_{\theta}(x_{ij}));\gamma,\xi),$ and $\ell_i(\alpha, \beta) = n_i^{-1}\sum_j \ell_{\text{CE}}(y_{ij}, g_{\theta}(x_{ij});\alpha,\beta)$.

\begin{wrapfigure}{r}{0.35\textwidth}
% \vspace{-0.1cm}
\centering
\includegraphics[width=\linewidth]{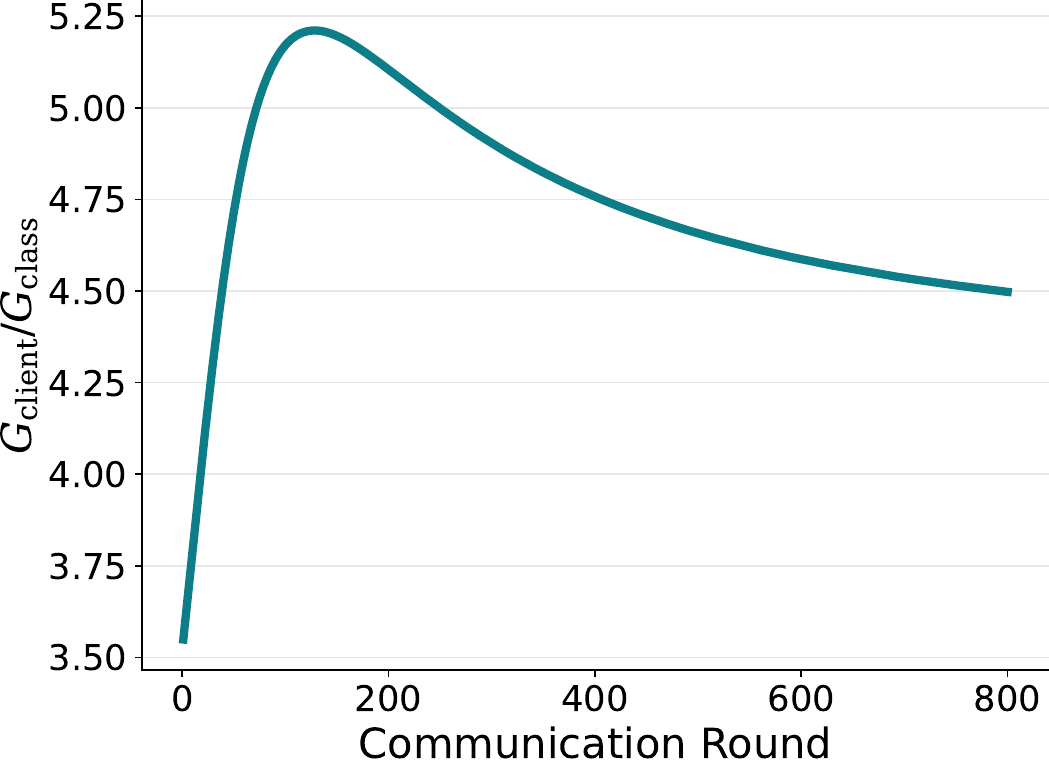}
\vspace{-0.7cm}
\caption{\textbf{Relative gradient drift}.}
\label{fig:gradient_drift}
\vspace{-0.9cm}
\end{wrapfigure}
A key difference arises between these two terms.  
For the client-classification loss $\ell_i(\gamma, \xi)$, the $i$-th client only observes samples labeled with its own client index $i$.  
In contrast, the target-class loss $\ell_i(\alpha, \beta)$ typically spans multiple target labels per client (though with varying proportions).  
This asymmetry leads to more pronounced \emph{gradient drift}\footnote{
The gradient drift of the client loss is 
$G_{\text{client}}^2 := \sum_{i=1}^m(n_i/N) \|\nabla \ell_i (\gamma, \xi) - \sum_{l=1}^m(n_i/N)\nabla \ell_l (\gamma, \xi)\|^2$, 
and that of the target-class loss is 
$G_{\text{class}}^2 := \sum_{i=1}^m (n_i/N)\|\nabla \ell_i (\alpha, \beta) - \sum_{l=1}^m(n_i/N)\nabla \ell_l (\alpha, \beta)\|^2$.} in $\nabla \ell_i(\gamma, \xi)$.
To illustrate, consider the gradient of the client-classification loss with respect to $\gamma_k$:
\[\partial\ell_i/\partial \gamma_k = n_i^{-1}\sum_{j}x_{ij}\{\mathbbm{1}(i=k)-p_k(h_{\tau}(g_{\theta}(x_{ij}));\gamma,\xi)\}\]
where $p_{k}(x;\gamma,\xi) = \exp(\gamma_k+\xi_k^{\top}x)/\sum_{l}\exp(\gamma_l+\xi_l^{\top}x)$.
Since $\mathbbm{1}(i=k)=0$ for all $k \neq i$, the gradient contributed by client $i$ provides no meaningful information about other clients’ parameters.  
As a result, local updates to the client-classification head are inherently biased, which in turn amplifies gradient drift relative to target-class head.  
Fig.~\ref{fig:gradient_drift} shows this effect on a $10$-class classification task with $3$ clients and a randomly generated embedding using FedAvg: the gradient drift for client classification is markedly more severe than that for target classification.

\textbf{Reweighting strategy.}
To address this, we draw on reweighting principles~\citep{chen2018gradnorm,liu2021conflictaverse} to propose a simple yet effective method with theoretical guarantees. 
Our approach down-weights client classification loss, whose gradient exhibits larger drift, resulting in the per-client loss:
\[
\tilde{\ell}_i(\zeta) = (1-\lambda) \ell_i(\gamma, \xi) + \lambda \ell_i(\alpha, \beta),
\]
for $\lambda>0.5$ and the reweighted global loss is $\tilde{\ell}(\zeta) =  \sum_{i=1}^{m} (n_i/N)\tilde{\ell}_i(\zeta)$, see Algorithm~\ref{algo:feddrm}.

\vspace{-0.2cm}
\begin{algorithm}[!ht] % Use [H] to place the algorithm exactly here
\caption{\ours{}}
\label{algo:feddrm}
\DontPrintSemicolon
% \SetKwInput{KwIn}{Input}
% \SetKwInput{KwOut}{Output}
\KwIn{Clients $m$, communication rounds $T$, local steps $E$, learning rate $\eta$, trade-off $\lambda$}
\BlankLine
Initialize backbone $\theta^{(0)}$, target head $\{(\alpha_i^{(0)},\beta_i^{(0)})\}_{i=1}^m$, and client head $(\tau^{(0)}, \gamma^{(0)}, \xi^{(0)})$

\For{$t=0,1,\dots, T-1$} {
    Server broadcasts {\footnotesize $(\theta^{(t)}, \tau^{(t)}, \gamma^{(t)}, \xi^{(t)})$} to all clients
    
    \For{client $i\in [m]$ in parallel} {
        {\footnotesize$(\theta_i^{(t,0)}, \alpha_i^{(t,0)},\beta_i^{(t,0)}, \tau_i^{(t,0)}, \gamma_i^{(t,0)}, \xi_i^{(t,0)})\leftarrow (\theta^{(t)}, \alpha_i^{(t)},\beta_i^{(t)}, \tau^{(t)}, \gamma^{(t)}, \xi^{(t)})$}
        
        \For{$k=0,1,\dots, E-1$} {
            Get target loss {\footnotesize$\ell_i(\alpha_i^{(t,k)},\beta_i^{(t,k)}, \theta_i^{(t,k)})$} and client loss {\footnotesize$\ell_i(\tau_i^{(t,k)},\gamma_i^{(t,k)},\xi_i^{(t,k)},\theta_i^{(t,k)})$}\\
            {\footnotesize$\tilde{\ell}_i(\zeta_i^{(t,k)}) \leftarrow\lambda\ell_i(\alpha_i^{(t,k)},\beta_i^{(t,k)}, \theta_i^{(t,k)})+ (1-\lambda) \ell_i(\tau_i^{(t,k)},\gamma_i^{(t,k)},\xi_i^{(t,k)},\theta_i^{(t,k)})$}\\
            {\footnotesize$\zeta_i^{(t,k+1)}\leftarrow \zeta_i^{(t,k)}-\eta\nabla\tilde{\ell}(\zeta_i^{(t,k)})$}
        }
        {\footnotesize$(\theta_i^{(t+1)}, \alpha_i^{(t+1)}, \beta_i^{(t+1)}, \tau_i^{(t+1)}, \gamma_i^{(t+1)}, \xi_i^{(t+1)})\leftarrow (\theta_i^{(t,E)}, \alpha_i^{(t,E)}, \beta_i^{(t,E)}, \tau_i^{(t,E)}, \gamma_i^{(t,E)}, \xi_i^{(t,E)})$}\\
        Client $i$ sends {\footnotesize$(\theta_i^{(t+1)},\tau_i^{(t+1)}, \gamma_i^{(t+1)}, \xi_i^{(t+1)})$} back to the server
    }
    Server updates {\footnotesize$(\theta^{(t+1)},\tau^{(t+1)}, \gamma^{(t+1)}, \xi^{(t+1)})\leftarrow \sum_{i=1}^m\frac{n_i}{N}(\theta_i^{(t+1)},\tau_i^{(t+1)}, \gamma_i^{(t+1)}, \xi_i^{(t+1)})$}
}
\end{algorithm}
\vspace{-0.2cm} % Adjust vertical spacing as needed

To accelerate convergence, a larger value of $\lambda$ is desirable. 
However, as $\lambda\to 1$, the target-class classification begins to dominate, which hinders effective training of the client classification and ultimately weakens the model’s ability to guide clients. 
To illustrate the trade-off between accuracy and convergence, we consider a simplified setting where the embedding is fixed (\ie $\theta$ and $\tau$ are known) and the true data-generating mechanism follows a multinomial logistic model with parameters $\zeta^{\text{true}} = (\gamma^{\text{true}}, \xi^{\text{true}}, \alpha^{\text{true}}, \beta^{\text{true}})$.

We define the heterogeneity measure $G^2(\zeta)=\sum_{i=1}^m (n_i/N)\|\nabla \ell_i(\zeta) - \nabla \ell(\zeta)\|_2^2$, which admits the decomposition
$G^2(\zeta) = 
(1-\lambda)^2G_{\text{client}}^2(\gamma,\xi) +
\lambda^2 G_{\text{class}}^2(\alpha,\beta)$.
Let $\bar{G}^2$, $\bar{G}_{\text{client}}^2$, and $\bar{G}_{\text{class}}^2$ denote the corresponding maximum values across updating rounds $t=0,1,\dots,T-1$. 
Then, $\bar{G}^2 \leq 
(1-\lambda)^2 \bar{G}_{\text{client}}^2 + \lambda^2\bar{G}_{\text{class}}^2$.
With this notation in place, we state the following result:
\begin{theorem}
\label{thm:total_error}
Assume $\ell^{\rho}$ is $\mu$-strongly convex and $L$-smooth. Suppose $\eta\le 1/L$ and furthermore $\eta L E \leq 1/4$. 
Let $\zeta^{(t)}$ be the output after $t$ communication rounds.
Then as $T, N \to \infty$ we have
{\footnotesize \[
\|\zeta^{(T)} - \zeta^{\text{true}}\|_2^2
= O_p\left(\frac{\{(1-\lambda)\|\gI_\gamma\|_{\min}+\rho\}^{-1}+\{\lambda\|\gI_\beta\|_{\min}+\rho\}^{-1}}{N} + \frac{\eta^2 E^2\{(1-\lambda)^2 \bar{G}_{\text{client}}^2 + \lambda^2\bar{G}_{\text{class}}^2\}}{1-(1-\eta\mu)^E}\right)
\]}
where $\|A\|_{\min} = \lambda_{\min}(A)$, and $\gI_{\text{client}}$ and $\gI_{\text{class}}$ denote the Fisher information matrices with respect to $(\gamma,\xi)$ and $(\alpha,\beta)$, respectively.
\end{theorem}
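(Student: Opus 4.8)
The plan is to split the total error into a statistical component and an optimization component, bound each separately, and exploit the fact that in the fixed-embedding regime the reweighted penalized loss decouples across the two parameter blocks.

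First I would record the block structure. With $\theta,\tau$ held fixed, $\ell_i(\bgamma,\bxi)$ and $\ell_i(\balpha,\bbeta)$ depend on disjoint coordinates, so $\tilde{\ell}^{\rho}(\bzeta)=\tilde{\ell}(\bzeta)+(\rho/2)\|\bzeta\|_2^2$ is block separable: a $(1-\lambda)$-weighted regularized multinomial-logistic likelihood in the client block $(\bgamma,\bxi)$ plus a $\lambda$-weighted one in the class block $(\balpha,\bbeta)$. Hence the minimizer $\tilde{\bzeta}_N$ splits blockwise and, up to the usual empirical-versus-population correction, its Hessian is block diagonal, $H=\diag\big((1-\lambda)\gI_\gamma+\rho I,\ \lambda\gI_\beta+\rho I\big)$, where $\gI_\gamma=\gI_{\text{client}}$ and $\gI_\beta=\gI_{\text{class}}$. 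Then applying $\|a+b\|_2^2\le 2\|a\|_2^2+2\|b\|_2^2$ with $a=\bzeta^{(T)}-\tilde{\bzeta}_N$ and $b=\tilde{\bzeta}_N-\bzeta^{\text{true}}$ separates the target into an optimization error $\|\bzeta^{(T)}-\tilde{\bzeta}_N\|_2^2$ and a statistical error $\|\tilde{\bzeta}_N-\bzeta^{\text{true}}\|_2^2$.

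For the statistical term I would use penalized M-estimation asymptotics. A second-order Taylor expansion of $\tilde{\ell}^{\rho}$ about $\bzeta^{\text{true}}$, together with consistency of $\tilde{\bzeta}_N$ guaranteed by $\mu$-strong convexity, gives $\tilde{\bzeta}_N-\bzeta^{\text{true}}=-H^{-1}\nabla\tilde{\ell}^{\rho}(\bzeta^{\text{true}})+o_p(\cdot)$. Since the score $\nabla\tilde{\ell}(\bzeta^{\text{true}})$ is a sample average of mean-zero terms whose covariance is of order $1/N$ scaled by the weighted Fisher information, I get $\sE\|\tilde{\bzeta}_N-\bzeta^{\text{true}}\|_2^2=O\big(\Tr(H^{-1})/N\big)$ up to the regularization contribution $\rho H^{-1}\bzeta^{\text{true}}$. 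Bounding $\Tr(H^{-1})$ blockwise by the reciprocal of the smallest eigenvalue of each diagonal block yields $\{(1-\lambda)\|\gI_\gamma\|_{\min}+\rho\}^{-1}+\{\lambda\|\gI_\beta\|_{\min}+\rho\}^{-1}$ divided by $N$, reproducing the first term. For the optimization term I would analyze Algorithm~\ref{algo:feddrm} as local SGD/FedAvg on the $\mu$-strongly convex, $L$-smooth objective $\tilde{\ell}^{\rho}$. The heart of the argument is a per-round contraction: over $E$ local steps each client contracts toward $\tilde{\bzeta}_N$ by the factor $(1-\eta\mu)^E$ but accumulates drift because clients descend heterogeneous local objectives. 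Under $\eta\le 1/L$ and $\eta L E\le 1/4$ I expect
\[
\sE\|\bzeta^{(t+1)}-\tilde{\bzeta}_N\|_2^2 \le (1-\eta\mu)^E\,\sE\|\bzeta^{(t)}-\tilde{\bzeta}_N\|_2^2 + C\eta^2 E^2 \bar{G}^2 .
\]
Unrolling and summing the geometric series as $T\to\infty$ gives the stationary bound $\eta^2 E^2 \bar{G}^2/\{1-(1-\eta\mu)^E\}$, and substituting $\bar{G}^2\le(1-\lambda)^2\bar{G}_{\text{client}}^2+\lambda^2\bar{G}_{\text{class}}^2$ (established just before the theorem) gives the second term. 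Adding the two bounds and absorbing constants into $O_p$ finishes the proof.

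The hardest part will be the optimization analysis: correctly tracking how the client drift accumulates across the $E$ inner steps and interacts with the strong-convexity contraction, so that the drift enters at order $\eta^2 E^2$ rather than a larger order. This is exactly where $\eta LE\le 1/4$ is used, to bound how far the local iterates travel and to discard the resulting cross terms. A secondary care point lies on the statistical side: a fixed $\rho$ introduces an $O(\rho)$ bias through $\rho H^{-1}\bzeta^{\text{true}}$, so to state the result cleanly as $O_p(1/N)$ one must either let $\rho$ shrink at a suitable rate with $N$ or argue this bias is dominated by the displayed curvature-inflated variance; in either case the Taylor remainder must be shown uniformly negligible via $L$-smoothness over a shrinking neighborhood of $\bzeta^{\text{true}}$.
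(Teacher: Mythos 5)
Your proposal follows essentially the same route as the paper's proof: the same decomposition of $\|\bzeta^{(T)}-\bzeta^{\text{true}}\|_2^2$ into statistical plus optimization error, the same penalized M-estimation/asymptotic-normality argument with the block-diagonal information $\mathrm{diag}\big((1-\lambda)\gI_\gamma+\rho I,\ \lambda\gI_\beta+\rho I\big)$ for the first term, and the same FedAvg recursion $s_{t+1}\le (1-\eta\mu)^E s_t+\eta^2E^2\bar G^2$ (using $\eta L E\le 1/4$ to control local drift) summed as a geometric series for the second, followed by $\bar G^2\le(1-\lambda)^2\bar G_{\text{client}}^2+\lambda^2\bar G_{\text{class}}^2$. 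The only notable difference is that you explicitly flag the $O(\rho)$ bias a fixed ridge penalty induces through $\rho H^{-1}\bzeta^{\text{true}}$, a care point the paper's asymptotic-normality lemma silently absorbs; this is a fair refinement but not a change of method.
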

The proof and the detailed definition of Fisher information matrix is deferred to App.~\ref{app:proof_error_bound}.
The first term in the bound capture the statistical accuracy, while the last term reflects the convergence rate. 
For faster convergence, a larger $\lambda$ is preferred, while for higher accuracy, $\lambda$ must be chosen to balance
$\{(1-\lambda)\|\gI_\gamma\|_{\min}+\rho\}^{-1}$ and $\{\lambda\|\gI_\beta\|_{\min}+\rho\}^{-1}$.
Together, these terms reveal the trade-off role of $\lambda$. 
In practice, since the Fisher information matrices and gradient drifts are unknown, $\lambda$ can be tuned using a validation set.
We empirically demonstrate the trade-off in Fig.~\ref{fig:lambda}.
\section{Experiments on Benchmark Datasets}
\subsection{Experiment Settings}
\textbf{Datasets.} We conduct experiments on CIFAR-$10$ and CIFAR-$100$~\citep{krizhevsky2009learning}, each containing $60,000$ $32\times32$ RGB images. 
CIFAR-$10$ has $10$ classes with $6,000$ images per class. 
CIFAR-$100$ has $100$ classes, with $600$ images per class, grouped into $20$ superclasses.
Based on these datasets, we construct three tasks of increasing complexity: (a) $10$-class classification on CIFAR-$10$, (b) $20$-class classification using the CIFAR-$100$ superclasses, and (c) $100$-class classification using the fine-grained CIFAR-$100$ labels.

\textbf{Non-IID settings.} 
Since standard benchmark datasets do not inherently exhibit statistical heterogeneity, we simulate non-IID scenarios following common practice~\citep{wu2023personalized, tan2023is,yang2024federated}.
We introduce both label and covariate shifts. 
For \textbf{label shift}, we construct client datasets using two partitioning strategies: (1) \emph{Dirichlet partition with $\alpha=0.3$ (Dir-$0.3$)}: Following~\citep{yurochkin2019bayesian}, we draw class proportions for each client from a Dirichlet distribution with concentration parameter $\alpha=0.3$, leading to heterogeneous label marginals and unequal dataset sizes across clients. 
(2) \emph{$S$ shards per client ($S$-SPC)}: Following~\citep{mcmahan2017communication}, we sort the data by class, split it into equal-sized, label-homogeneous shards, and assign $S$ shards uniformly at random to each client. This yields equal dataset sizes while restricting each client's label support to at most $S$ classes. 
Each dataset is first partitioned across clients using one of the partitioning strategies, and within each client, the local dataset is further split $70/30$ into training and test sets.
For \textbf{covariate shift}, all three nonlinear transformations are applied to each client's dataset: (1) \emph{gamma correction}: brightness adjustment with client-specific gamma factor $\gamma$. (2) \emph{hue adjustment}: color rotation with client-specific hue factor $\Delta h$. (3) \emph{saturation scaling}: color vividness adjustment with client-specific saturation factor $\kappa$.
We set $\gamma\in \{0.6,1.4\}$, $\Delta h\in \{-0.1, 0.1\}$, and $\kappa\in \{0.5, 1.5\}$ in the main experiment, resulting in an $8$-client setting. See examples in App.~\ref{app:dataset_details}.

\textbf{Baselines.} We compare our \ours{} against a variety of state-of-the-art personalized FL techniques, which learn a local model on each client. Ditto~\citep{li2021ditto} encourages local models to stay close via global regularization. FedRep~\citep{collins2021exploiting} learns a global backbone with local linear heads. FedBABU~\citep{jaehoon2022fedbabu} freezes local classifiers while training a global backbone, then fine-tunes classifiers per client. FedPAC~\citep{xu2023personalized} personalizes through feature alignment to a global backbone. FedALA~\citep{zhang2023fedala} learns client-wise mixing weights that adaptively interpolate between the local and global models. 
FedAS~\citep{yang2024fedas} aligns local weights to the global model, followed by client-specific updates. ConFREE~\citep{zheng2025confree} resolves conflicts among client updates before server aggregation. 
We also compare with other standard FL algorithms-- FedAvg~\citep{mcmahan2017communication}, FedProx~\citep{li2020federated}, and FedSAM~\citep{qu2022generalized}--which aim to achieve a single global model under data heterogeneity. 
To ensure fair comparison, we fine-tune their global models locally on each client, yielding personalized variants denoted FedAvgFT, FedProxFT, and FedSAMFT.

\textbf{Network architecture.} We use ResNet-18~\citep{he2016deep} as the feature extractor (backbone), which encodes each input image into a $512$-dimensional embedding. For the baselines, this embedding is projected to $256$ dimensions via a linear layer and fed into the image classifier. 
\ours{} extends this design by adding a separate client-classification head: the $512$-dimensional embedding is projected to $256$ dimensions and fed into the client classifier. 
Importantly, \ours{} uses the same image classification architecture as all baselines.

\textbf{Training details.} 
To ensure fair comparison, all methods are trained for $800$ communication rounds with $10$ local steps per round and a batch size of $128$. 
For fine-tuning-based methods, we allocate $700$ rounds for global training and $100$ rounds for local fine-tuning.
We use SGD with momentum $0.9$, an initial learning rate of $0.01$ with cosine annealing, and weight decay $5\times10^{-4}$. 
% Random seeds are fixed to $2025$ for reproducibility. 
Method-specific hyperparameters are tuned to achieve their best performance.

\subsection{Evaluation Protocol}
To assess the effectiveness of our proposed method in guiding clients under heterogeneous FL, we introduce a new performance metric, termed \textbf{system accuracy}. 
This metric is designed to evaluate the server's ability to guide clients effectively. 
Concretely, we construct a pooled test set from all clients. 
For \ours{}, we first use the client classification head to identify the most likely client for each test sample by maximizing the client classification probability. 
The local model of the selected client is then used to predict the image class label. 
For baseline methods, which lack this client-guidance mechanism, we instead apply a majority-voting strategy: each client's personalized model makes a prediction for every sample in the pooled test set, and the majority label is taken as the final prediction. 
The overall classification accuracy on the pooled test set is reported as the system accuracy.
% For a comprehensive assessment, we evaluate each method’s performance across two key dimensions: client-level personalization and system-level utility. 
We also report the widely used \textbf{average accuracy} in personalized FL, which measures each local model's classification accuracy on its own test set. The final value is computed as the weighted average across all clients, with weights proportional to the size of each client's training set.
% \textbf{System Accuracy.} Proposed in this paper, this metric assesses the FL system's performance as a unified service. 
In all experiments, we report the mean and standard deviation of both average accuracy and system accuracy over the final $50$ communication rounds.

\subsection{Main Results}
We present the system accuracy and average accuracy in Tab.~\ref{tab:system_accuracy} and Tab.~\ref{tab:average_test_accuracy}, respectively. 
\begin{table*}[!ht]
\centering
\vspace{-0.3cm}
\caption{System accuracy on CIFAR-10/20/100 under Dir-0.3 and 5/25-SPC settings.}
\label{tab:system_accuracy}
\vspace{-0.3cm}
\resizebox{0.98\textwidth}{!}{%
\begin{tabular}{lcccccc}
\toprule
\multirow{2}{*}{Method} & \multicolumn{2}{c}{CIFAR-10} & \multicolumn{2}{c}{CIFAR-20} & \multicolumn{2}{c}{CIFAR-100} \\
\cmidrule(lr){2-3} \cmidrule(lr){4-5} \cmidrule(lr){6-7}
& Dir-0.3 & 5-SPC & Dir-0.3 & 25-SPC & Dir-0.3 & 25-SPC \\
\midrule
Ditto    & $47.64\pm 0.25$ & $46.99\pm 0.23$ & $29.56\pm 0.18$ & $31.87\pm 0.16$ & $15.97\pm 0.15$ & $19.51\pm 0.16$ \\
FedRep   & $24.96\pm 0.19$ & $33.19\pm 0.22$ & $23.83\pm 0.15$ & $24.82\pm 0.20$ & $11.11\pm 0.12$ & $12.02\pm 0.12$ \\
FedBABU  & $57.43\pm 0.17$ & $57.17\pm 0.24$ & $36.96\pm 0.17$ & $40.78\pm 0.13$ & $22.92\pm 0.17$ & $27.27\pm 0.15$ \\
FedPAC   & $25.14\pm 0.21$ & $33.24\pm 0.19$ & $23.83\pm 0.17$ & $24.83\pm 0.18$ & $11.17\pm 0.12$ & $11.99\pm 0.12$ \\
FedALA   & $61.33\pm 0.17$ & $53.20\pm 0.20$ & $32.78\pm 0.17$ & $35.79\pm 0.14$ & $20.70\pm 0.14$ & $25.80\pm 0.16$ \\
FedAS    & $28.76\pm 0.19$ & $39.71\pm 0.20$ & $27.16\pm 0.16$ & $27.50\pm 0.15$ & $13.87\pm 0.13$ & $13.51\pm 0.14$ \\
ConFREE  & $25.66\pm 0.22$ & $34.06\pm 0.22$ & $24.08\pm 0.17$ & $25.15\pm 0.18$ & $11.32\pm 0.13$ & $12.12\pm 0.13$ \\
\midrule
FedAvgFT   & $54.90\pm 0.22$ & $56.19\pm 0.17$ & $37.53\pm 0.18$ & $41.17\pm 0.16$ & $25.21\pm 0.16$ & $27.96\pm 0.17$ \\
FedProxFT  & $55.01\pm 0.20$ & $56.27\pm 0.21$ & $37.61\pm 0.18$ & $41.20\pm 0.15$ & $25.15\pm 0.13$ & $27.82\pm 0.18$ \\
FedSAMFT   & $55.83\pm 0.21$ & $51.73\pm 0.19$ & $34.23\pm 0.14$ & $36.60\pm 0.17$ & $22.97\pm 0.16$ & $26.89\pm 0.15$ \\
\midrule
\ours{} & $\mathbf{62.78\pm 0.20}$ & $\mathbf{58.50\pm 0.23}$ & $\mathbf{37.63\pm 0.21}$ & $\mathbf{41.44\pm 0.19}$ & $\mathbf{26.03\pm 0.15}$ & $\mathbf{31.24\pm 0.17}$ \\
\bottomrule
\end{tabular}}
\end{table*}
Across all settings, \ours{} consistently outperforms the baselines on both metrics, demonstrating its ability to leverage statistical heterogeneity for system-level intelligence while also providing effective client-level personalization. In contrast, the baselines primarily focus on addressing data heterogeneity, resulting in lower system accuracy due to disagreements among their personalized models. Additionally, when using a majority-vote approach as an intelligence router, baseline methods must evaluate all $m$ local models, whereas \ours{} requires evaluating only a single model. The shared backbone in \ours{} can also be efficiently repurposed for image prediction by feeding it into the corresponding client-specific classification head.
We also compare the influence of label shift in this experiment beyond covariate shift, the results align with our expectation that the less severe label shift Dir-0.3 case has a higher accuracy than 5-SPC for all methods.

\begin{table*}[!ht]
\centering
\vspace{-0.3cm}
\caption{Average accuracy on CIFAR-10/20/100 under Dir-0.3 and 5/25-SPC settings.}
\label{tab:average_test_accuracy}
\vspace{-0.3cm}
\resizebox{0.98\textwidth}{!}{%
\begin{tabular}{lcccccc}
\toprule
\multirow{2}{*}{Method} & \multicolumn{2}{c}{CIFAR-10} & \multicolumn{2}{c}{CIFAR-20} & \multicolumn{2}{c}{CIFAR-100} \\
\cmidrule(lr){2-3} \cmidrule(lr){4-5} \cmidrule(lr){6-7}
& Dir-0.3 & 5-SPC & Dir-0.3 & 25-SPC & Dir-0.3 & 25-SPC \\
\midrule
Ditto    & $76.34\pm 0.11$ & $65.17\pm 0.17$ & $40.36\pm 0.18$ & $44.83\pm 0.19$ & $29.25\pm 0.16$ & $36.58\pm 0.18$ \\
FedRep   & $76.49\pm 0.15$ & $64.96\pm 0.19$ & $41.54\pm 0.16$ & $46.57\pm 0.19$ & $31.22\pm 0.15$ & $39.11\pm 0.20$ \\
FedBABU  & $78.22\pm 0.14$ & $70.22\pm 0.18$ & $44.18\pm 0.15$ & $48.98\pm 0.19$ & $32.91\pm 0.14$ & $40.75\pm 0.14$ \\
FedPAC   & $76.53\pm 0.13$ & $65.05\pm 0.19$ & $41.60\pm 0.16$ & $46.55\pm 0.19$ & $31.20\pm 0.17$ & $39.13\pm 0.22$ \\
FedALA   & $64.35\pm 2.40$ & $55.58\pm 1.88$ & $33.30\pm 0.47$ & $36.41\pm 0.58$ & $21.83\pm 0.94$ & $27.83\pm 1.59$ \\
FedAS    & $78.69\pm 0.17$ & $69.82\pm 0.16$ & $45.65\pm 0.18$ & $51.73\pm 0.17$ & $36.06\pm 0.13$ & $44.26\pm 0.19$ \\
ConFREE  & $76.73\pm 0.16$ & $65.59\pm 0.17$ & $41.91\pm 0.16$ & $47.04\pm 0.21$ & $31.57\pm 0.15$ & $39.63\pm 0.17$ \\
\midrule
FedAvgFT   & $79.08\pm 0.11$ & $72.10\pm 0.18$ & $46.55\pm 0.15$ & $52.54\pm 0.17$ & $36.83\pm 0.17$ & $43.94\pm 0.20$ \\
FedProxFT  & $79.07\pm 0.12$ & $72.07\pm 0.18$ & $46.58\pm 0.19$ & $52.49\pm 0.18$ & $36.87\pm 0.18$ & $43.96\pm 0.19$ \\
FedSAMFT   & $75.53\pm 0.11$ & $66.30\pm 0.16$ & $41.11\pm 0.16$ & $44.89\pm 0.16$ & $32.53\pm 0.14$ & $40.25\pm 0.16$ \\
\midrule
\ours{} & $\mathbf{80.26\pm 0.16}$ & $\mathbf{72.50\pm 0.16}$ & $\mathbf{47.91\pm 0.18}$ & $\mathbf{53.72\pm 0.20}$ & $\mathbf{37.91\pm 0.15}$ & $\mathbf{46.73\pm 0.15}$ \\
\bottomrule
\end{tabular}}
\end{table*}

\subsection{Sensitivity Analysis}
We evaluate the sensitivity of our method to several key factors. Experimental details are reported in App.~\ref{app:sensitivity_analysis}.

\begin{wrapfigure}{r}{0.35\textwidth}
\vspace{-0.6cm}
\centering
\includegraphics[width=\linewidth]{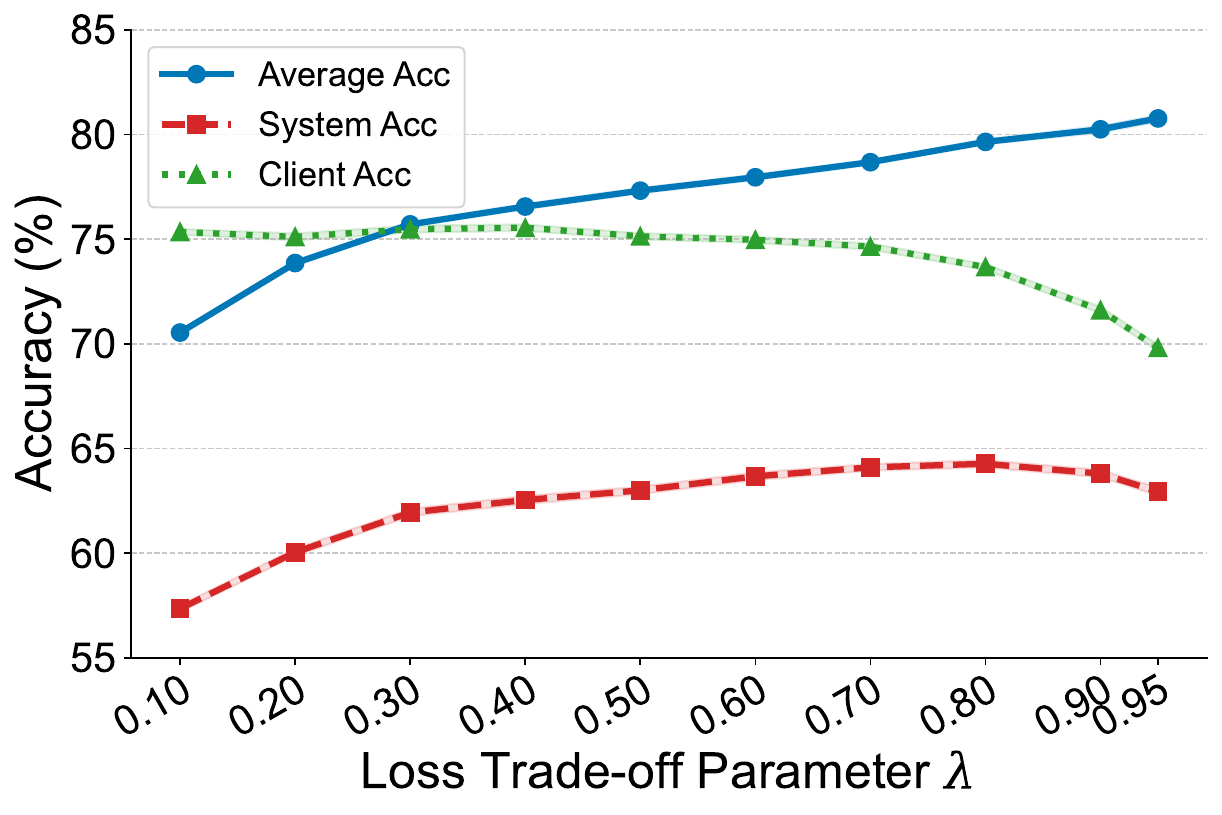}
\vspace{-0.75cm}
\caption{\textbf{Client \& image accuracy trade-off} on CIFAR-10 under the Dir-0.3 setting.}
\label{fig:lambda}
\vspace{-0.6cm}
\end{wrapfigure}
\textbf{Impact of weight $\lambda$ on system accuracy.} 
The reweighting parameter $\lambda$ is crucial for deploying the EL-based framework in the FL setting. 
As shown in Fig.~\ref{fig:lambda}, we observe the expected trade-off between two objectives: increasing $\lambda$ places more emphasis on image classification and less on client classification.
This shift improves overall accuracy but reduces client accuracy, consistent with Thm.~\ref{thm:total_error}. 
The best balance between the two is achieved at $\lambda = 0.8$, where system accuracy peaks, marking the optimal trade-off for the task of guiding queries in the FL system.

\textbf{Covariate shift intensity.} 
We have already demonstrated in the main results that label shift is detrimental to all methods, with more severe shifts causing greater harm. 
To further examine the impact of covariate shift, we fix the degree of label shift and vary covariate shift at three intensity levels—low, mid, and high—by adjusting the parameters of the nonlinear color transformations. 
As shown in Fig.~\ref{fig:covariate_shift}, the results reveal a clear trade-off: higher covariate shift intensifies differences between client data distributions, which facilitates client routing but simultaneously weakens information sharing across clients, thereby making image classification more difficult.
Additional results examining the sensitivity of our method to the severity of label shift are provided in App.~\ref{app:label_shift_intensity}.

\begin{wrapfigure}{l}{0.35\textwidth}
\vspace{-0.3cm}
\centering
\includegraphics[width=\linewidth]{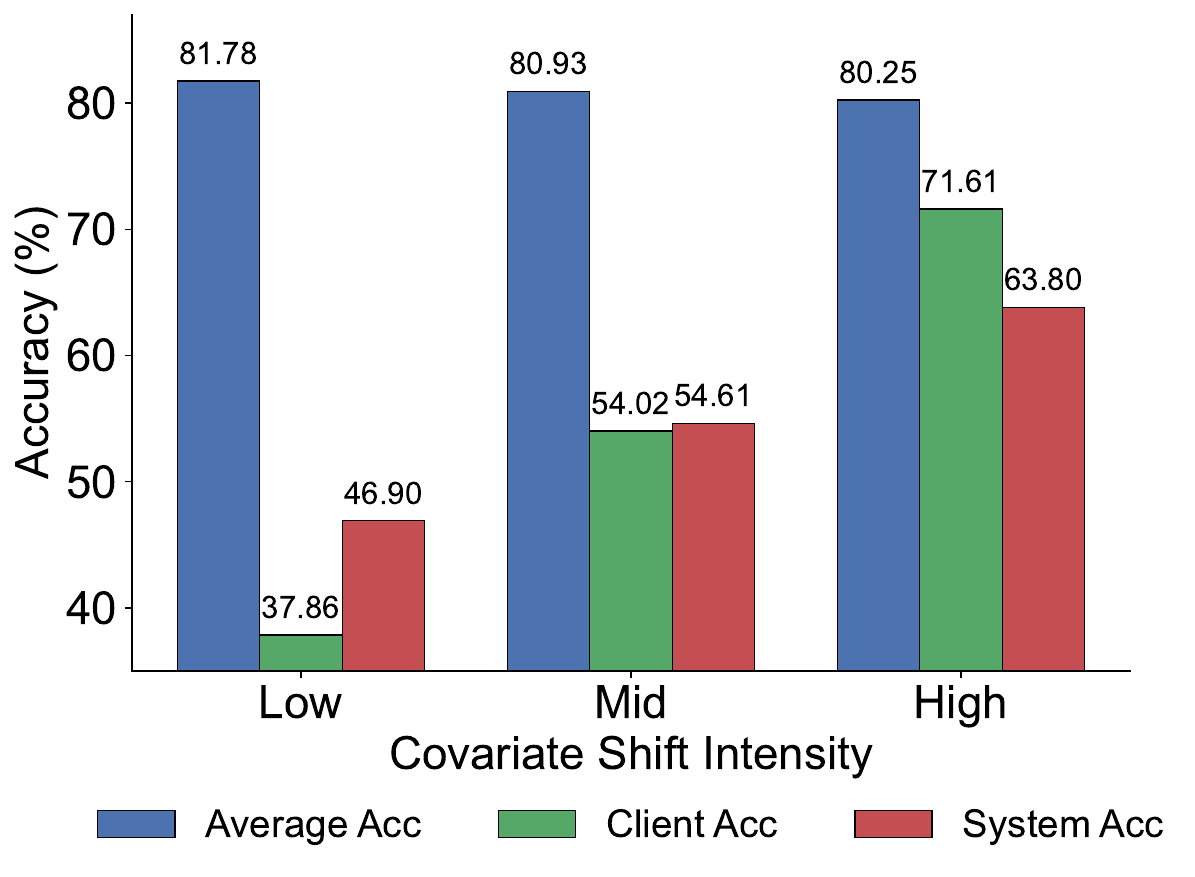}
\vspace{-0.7cm}
\caption{\textbf{Influence of covariate shift intensity} on CIFAR-10 under the Dir-0.3 setting.}
\label{fig:covariate_shift}
\vspace{-0.5cm}
\end{wrapfigure}
\textbf{Backbone sharing strategy.} 
In our formulation, the target-class classification task uses the embedding $g_{\theta}(x)$ for an input feature $x$, while the client-classification task uses the embedding $h_{\tau}(g_{\theta}(x))$ for the same feature.  
Since both $g_{\theta}$ and $h_{\tau}$ are parameterized functions, the optimal sharing strategy between the two is not obvious. 
To explore this, we evaluate four cases: no sharing, shallow sharing, mid sharing, and deep sharing. 
As shown in Fig.~\ref{fig:sharing_strategy}, all strategies perform similarly, with shallow sharing slightly ahead. 
However, given the substantial increase in parameters for shallow sharing, deep sharing offers a more parameter-efficient alternative while maintaining strong performance.
\begin{wrapfigure}{r}{0.35\textwidth}
\vspace{-1cm}
\centering
\includegraphics[width=\linewidth]{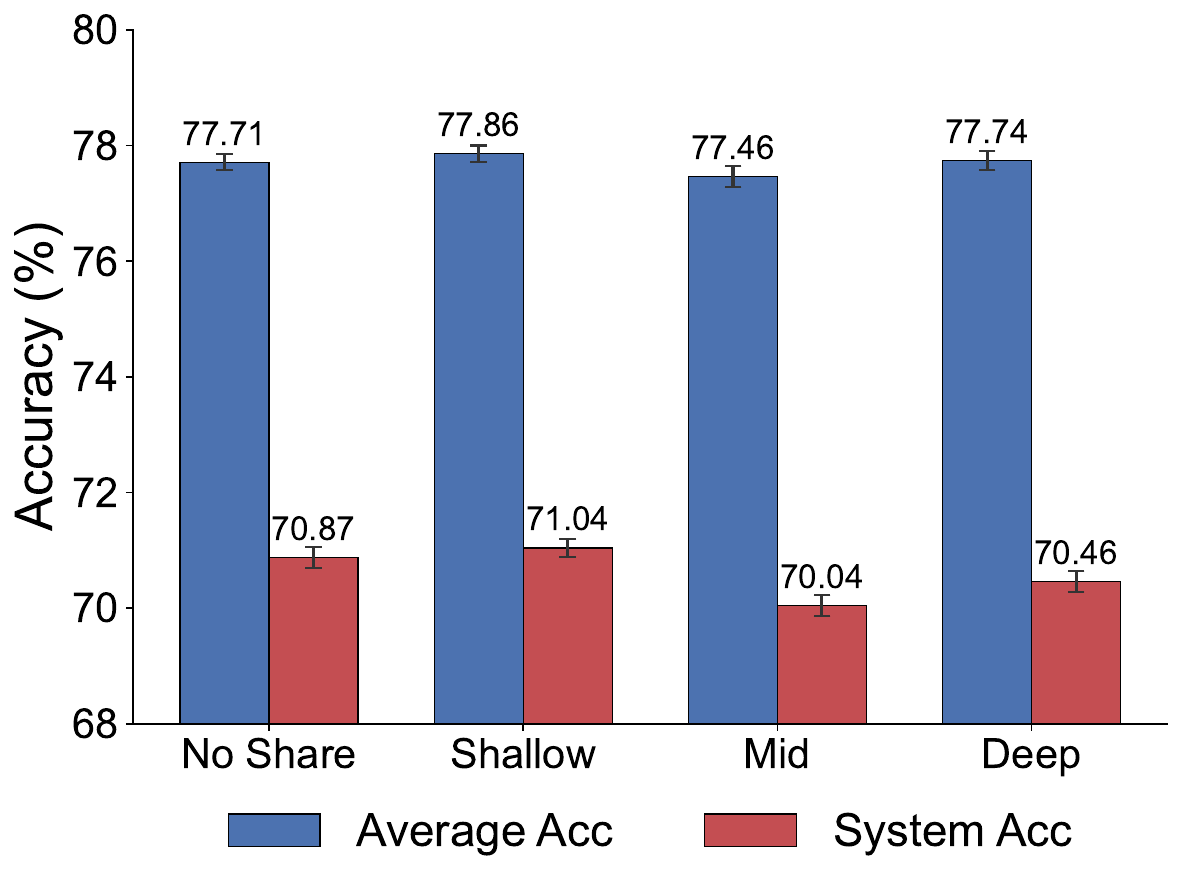}
\vspace{-0.6cm}
\caption{\textbf{Impact of the sharing strategy} on CIFAR-10 under the Dir-0.3 setting using LeNet~\citep{lecun1998gradient}.}
\label{fig:sharing_strategy}
\vspace{-0.7cm}
\end{wrapfigure}

\textbf{Number of clients.} To check scalability, we set the number of clients from 8 to 32 and compare \ours{} against the top-$2$ baselines from the main experiments. As shown in Tab.~\ref{tab:sensitivity_clients}, while all methods exhibit a moderate performance decline as the client pool expands (a common challenge in FL), \ours{} consistently maintains a significant performance advantage across both system and average accuracy. This demonstrates that our method scales effectively, preserving its superiority even as the system grows.
\begin{table}[!ht]
\centering
\caption{Sensitivity analysis on the number of clients $m$.}
\label{tab:sensitivity_clients}
\vspace{-0.3cm}
\resizebox{\linewidth}{!}{%
\begin{tabular}{lcccccccc}
\toprule
\multirow{2}{*}{Method} & \multicolumn{4}{c}{System Accuracy} & \multicolumn{4}{c}{Average Accuracy} \\
\cmidrule(lr){2-5} \cmidrule(lr){6-9}
& $m=8$ & $m=16$ & $m=24$ & $m=32$ & $m=8$ & $m=16$ & $m=24$ & $m=32$ \\
\midrule
FedAS    & $32.58 \pm 0.19$ & $36.55 \pm 0.21$ & $38.12 \pm 0.21$ & $34.50 \pm 0.18$ & $78.86 \pm 0.12$ & $73.28 \pm 0.15$ & $73.90 \pm 0.16$ & $73.45 \pm 0.15$ \\
FedAvgFT & $53.17 \pm 0.22$ & $50.61 \pm 0.20$ & $49.13 \pm 0.20$ & $45.07 \pm 0.21$ & $78.92 \pm 0.15$ & $73.41 \pm 0.16$ & $74.66 \pm 0.17$ & $74.18 \pm 0.15$ \\
\midrule
\ours{}   & $\mathbf{59.59 \pm 0.20}$ & $\mathbf{51.61 \pm 0.22}$ & $\mathbf{50.18 \pm 0.20}$ & $\mathbf{46.62 \pm 0.17}$ & $\mathbf{80.47 \pm 0.12}$ & $\mathbf{74.25 \pm 0.15}$ & $\mathbf{75.04 \pm 0.14}$ & $\mathbf{74.45 \pm 0.18}$ \\
\bottomrule
\end{tabular}%
}
\vspace{-0.4cm}
\end{table}

\section{Experiment on Real Medical Dataset}
To further demonstrate \ours{}'s effectiveness in healthcare, we evaluate it on the real medical dataset RETINA, following~\citep{huang2025federated}. RETINA comprises fundus images from three clinical centers—ACRIMA~\citep{diazpinto2019cnns}, Rim~\citep{fumero2020rimone}, and Refuge~\citep{orlando2020refuge}. We exclude Drishti, which has only 82 images, while the others provide at least 385. Each $96\times 96$ RGB image is labeled as Glaucomatous or Normal, creating a binary classification task.

This dataset naturally fits a $3$-client FL system, with each client representing one center. The different image sources cause a covariate shift in RETINA. Furthermore, the class ratios (positive vs. negative) across the three datasets are $1.34$, $1.94$, and $0.46$, introducing a realistic label shift.
Our experimental setup largely follows the CIFAR experiments, with several adjustments: the network embedding dimension is set to $4608$ and the projection dimension $512$. All methods train for $100$ communication rounds with a batch size of $32$. For fine-tuning-based methods, we allocate $90$ rounds to global training and $10$ to local fine-tuning.

\begin{figure}[!ht]
\centering
\setlength{\abovecaptionskip}{0pt}
\includegraphics[width=0.9\linewidth]{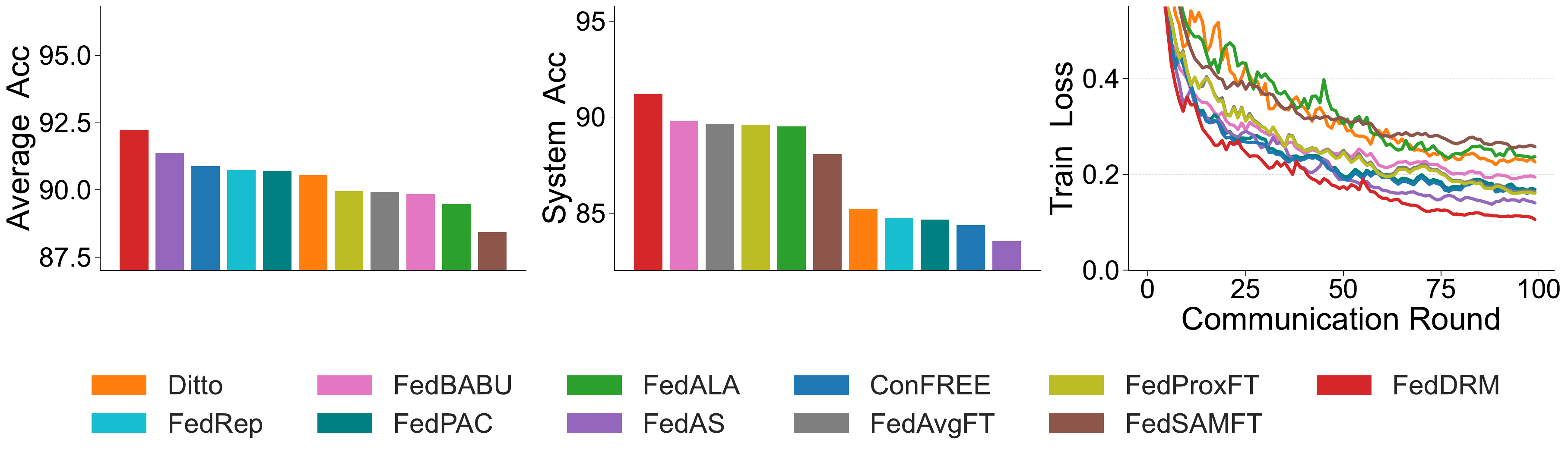}
\caption{Average accuracy, system accuracy, and train loss on RETINA.}
\label{fig:retina_results}
\end{figure}
Fig.~\ref{fig:retina_results} shows that \ours{} consistently outperforms all baselines on RETINA. Measured in absolute accuracy points, \ours{} exceeds the competing methods by $0.83$–$3.77$ points in average accuracy and by $1.41$–$7.67$ points in system accuracy—substantial margins given the small size and pronounced heterogeneity of this dataset. These results underscore the robustness and practical relevance of \ours{} in the presence of simultaneous covariate and label shifts. Furthermore, \ours{} achieves the lowest training loss and the most stable convergence trajectory, demonstrating its effectiveness in capturing heterogeneous structure in real multi-center medical data.
\section{Conclusion}

This paper presents \ours{}, a novel FL paradigm that transforms statistical heterogeneity from a challenge into a resource. 
By introducing a unified EL based framework, \ours{} simultaneously learns accurate local models and a client-selection policy, enabling a central server to intelligently route new queries to the most appropriate client. 
Empirical results demonstrate that our method outperforms existing approaches in both client-level personalization and system-level utility, paving the way for more adaptive and resource-efficient FL systems that actively leverage statistical diversity.
We believe that this work marks a meaningful step toward more adaptive, resource-efficient, and intelligent FL systems.

\section*{Acknowledgment}
% The first author was supported by ...
Zijian Wang \& Qiong Zhang are supported by the National Key R\&D Program of China Grant 2024YFA1015800 and the National Natural Science Foundation of China Grant 12301391. 
Xiaofei Zhang is supported by National Natural Science Foundation of China Grant 12501394 and the Natural Science Foundation of Hubei Province of China Grant 2025AFC035.
Yukun Liu is supported by National Natural Science Foundation of China Grant 12571283.
The contributions of Xin Zhang were made in a personal capacity and outside the scope of his employment at Meta. 
This research did not use any Meta data, tools, or internal resources. 
The views expressed in this paper are those of the authors and do not necessarily reflect the views of Meta.

% \clearpage
% \newpage
% \section*{Reproducibility statement}
% We have taken several steps to ensure the reproducibility of our work. 
% All proofs of theoretical results are provided in the appendix with detailed explanations. 
% The complete implementation of our proposed paradigm, including training scripts and evaluation pipelines, is made available in the anonymous supplementary material. 
% Experimental settings, datasets, and preprocessing procedures are described in the main text and Appendix. 
% Together, these materials are intended to facilitate the faithful reproduction and verification of both our theoretical and empirical results.
\bibliography{biblio}
\bibliographystyle{iclr2026_conference}
\appendix
\section{The Use of Large Language Models (LLMs)}
Large language models (LLMs) were used solely as assistive tools for language editing and polishing of the manuscript. 
The authors take full responsibility for the accuracy and integrity of the manuscript.

\section{Density ratio model examples}
Many parametric distribution families including normal and Gamma
are special cases of the DRM.
\begin{example}[Normal distribution]
For normal distribution $\phi(x;\mu,\sigma^2)$ with mean $\mu$ and variance $\sigma^2$. We have $\log\{\phi(x;\mu_1,\sigma_1^2)/\phi(x;\mu_2,\sigma_2^2)\} = \theta_0 + \theta_1 x + \theta_2 x^2$ where $\theta_0=\log \sigma_2/\sigma_1 - (\mu_1^2/\sigma_1^2-\mu_2^2/\sigma_2^2)/2$, $\theta_1= \mu_1/\sigma_1^2 - \mu_2/\sigma_2^2$, $\theta_2=(\sigma_2^{-2}-\sigma_1^{-2})/2$ and the basis function is $g(x) = (1,x,x^2)^{\top}$.
\end{example}

\begin{example}[Gamma distribution]
For gamma distribution with shape parameter $\alpha>0$ and rate parameter $\beta>0$. We have $\log\{f(x;\alpha_1,\beta_1)/f(x;\alpha_2,\beta_2)\} = \theta_0 + \theta_1 x + \theta_2 \log x$ where
$\theta_0=\log\Gamma(\alpha_2) - \log\Gamma(\alpha_1) + \alpha_1\log\beta_1 - \alpha_2\log\beta_2$, $\theta_1=\beta_2-\beta_1$, $\theta_2=\alpha_1 - \alpha_2$ and the basis function is $g(x) = (1,x,\log x)^{\top}$.
\end{example}

%%%%%%%%%%%%%%%%%%%%%%%%%%%%%%%%%%%%%%%%%%%%%%%%%%%%%%%%%%%%%%%%%%%%%%%%%%%%%%%%%%%%%%%%%%%%%%%%%%%%%%%%%%%%%%%%%%%%%%%%%%%%%%%%%%%
\section{Mathematical details behind \ours{}}
\subsection{Derivation of~\eqref{eq:marginal_drm}}
\label{app:marginal_drm}
\begin{proof}
By the total law of probability, the marginal density of $x$ is
\[
\begin{split}
dP_X^{(l)}(x) =&~\sum_{k}\pi_{lk}d\mathbb{P}^{(l)}(x|y=k)\\
=&~\sum_{k}\pi_{lk}\exp\{\alpha_{lk}^{\dagger} + \beta_k^{\top}g_{\theta}(x)\}d\mathbb{P}^{(l)}(x|y=1)\\
=&~\sum_{k}\pi_{lk}\exp\{\alpha_{lk}^{\dagger} + \beta_k^{\top}g_{\theta}(x)\}\exp\{\gamma_l + \xi_l^{\top}h_{\eta}(g_{\theta}(x))\}d\mathbb{P}^{(0)}(x|y=1)\\
=&~\sum_{k}\pi_{l1}\exp\{\alpha_{k} + \beta_k^{\top}g_{\theta}(x)\}\exp\{\gamma_l + \xi_l^{\top}h_{\eta}(g_{\theta}(x))\}d\mathbb{P}^{(0)}(x|y=1)\\
=&~\sum_{k}\frac{\pi_{l1}}{\pi_{01}} \pi_{0k}\exp\{\alpha_{0k}^{\dagger} + \beta_k^{\top}g_{\theta}(x)\}\exp\{\gamma_l + \xi_l^{\top}h_{\eta}(g_{\theta}(x))\}d\mathbb{P}^{(0)}(x|y=1)\\
=&~\frac{\pi_{l1}}{\pi_{01}}\exp\{\gamma_l + \xi_l^{\top}h_{\eta}(g_{\theta}(x))\}dP_X^{(0)}(x)\\
\end{split}
\]
Let $\gamma_{l}^{\dagger} = \gamma_l + \log(\pi_{l1}/\pi_{01})$ and divide $dP_X^{(0)}(x)$ on both sides completes the proof.   
\end{proof}

\subsection{Derivation of the profile log-likelihood}
\label{app:profile_loglik}
Let $p = \{p_{ij},j\in[n_i]\}_{i=1}^{m}$.
Given $\zeta=(\alpha,\beta,\gamma,\xi,\theta,\eta)$, the empirical log-likelihood function as a function of $p$ becomes
\[
\ell_{N}(p) = \sum
_{i=1}^{m}\sum_{j=1}^{n_i} \log p_{ij}  + \text{constant}
\]
where the constant depends only on $\zeta$ and does not depend on $p$.
We now maximize the empirical log-likelihood function
with respect to $p$ under the constraint~\eqref{eq:constraints} using the Lagrange multiplier method. 

Let 
\[
\mathcal{L}=\sum_{i,j}\log p_{ij} -N\mu\sum_{i,j}p_{ij} - N\sum_{l=1}^{m}\rho_l\sum_{i,j}p_{ij}[\exp\{\gamma_l^{\dagger} + \xi_l^{\top}h_{\eta}(g_{\theta}(x_{ij}))\}-1]
\]
Setting 
\[
0=\frac{\partial \mathcal{L}}{\partial p_{ij}} =\frac{1}{p_{ij}}-N\mu - N\sum_{l=1}^{m}\rho_l[\exp\{\gamma_l^{\dagger} + \xi_l^{\top}h_{\eta}(g_{\theta}(x_{ij}))\}-1].    
\]
Then multiply both sides by $p_{ij}$ and sum over $i$ and $j$, we have that
\[
\begin{split}
0 = \sum_{i,j}p_{ij}\frac{\partial \mathcal{L}}{\partial p_{ij}} =&~\sum_{i,j}\left\{1-N\mu p_{ij} - \sum_{l=1}^{m}\rho_lp_{ij}[\exp\{\gamma_l^{\dagger} + \xi_l^{\top}h_{\eta}(g_{\theta}(x_{ij}))\}-1]\right\}\\ 
=&~N - N\mu
\end{split}
\]
this gives $\mu=1$.
Hence, we get
\[
p_{ij}=\frac{1}{N \left\{1 + \sum_{l}\rho_l[\exp\{\gamma_l^{\dagger} + \xi_l^{\top}h_{\eta}(g_{\theta}(x_{ij}))\}-1]\right\}}
\]
where $\rho_{l}$s are solutions to
\[\sum_{i,j} \frac{\exp\{\gamma_l^{\dagger} + \xi_l^{\top}h_{\eta}(g_{\theta}(x_{ij}))\}-1}{1 + \sum_{l'}\rho_{l'}[\exp\{\gamma_{l'}^{\dagger} + \xi_{l'}^{\top}h_{\eta}(g_{\theta}(x_{ij}))\}-1]} = 0\]
by plugin the expression for $p_{ij}$ into the second constraints~\eqref{eq:constraints}.

\subsection{Derivation of the value of Lagrange multiplier at optimal}
\label{app:profile_log_EL_dual}
Recall that the profile log-EL has the following form
\[
\begin{split}
p\ell_{N}(\bzeta)=&~\sum_{i,j,k}\mathbbm{1}(y_{ij}=k)\log \sP(y_{ij}=k|x_{ij}) + \sum_{i,j}\{\gamma_i^{\dagger}+\xi_{i}^{\top}h_{\eta}(g_{\theta}(x_{ij})) + \log p_{ij}(\bzeta)\}\\
=&~\sum_{i,j,k}\mathbbm{1}(y_{ij}=k)\log \sP(y_{ij}=k|x_{ij}) + \sum_{i,j}\left\{\gamma_i^{\dagger}+\xi_{i}^{\top}h_{\eta}(g_{\theta}(x_{ij}))\right\}\\
&-\sum_{i,j}\log\left\{1+\sum_{l=1}^{m}\rho_{l} \left[\exp\{\gamma_l^{\dagger} + \xi_l^{\top}h_{\eta}(g_{\theta}(x_{ij}))\}-1\right]\right\}
\end{split}
\]
where
$\rho_{l}$s are solutions to
\[\sum_{i,j} \frac{\exp\{\gamma_l^{\dagger} + \xi_l^{\top}h_{\eta}(g_{\theta}(x_{ij}))\}-1}{1 + \sum_{l'}\rho_{l'}[\exp\{\gamma_{l'}^{\dagger} + \xi_{l'}^{\top}h_{\eta}(g_{\theta}(x_{ij}))\}-1]} = 0.\]

Taking the partial derivative with respect to $\gamma^{\dagger}_{l}$, we have
\begin{equation*}
\begin{split}
0=\frac{\partial p\ell_{N}}{\partial \gamma^{\dagger}_{l}} =&~n_l-\sum_{i,j}\frac{\rho_{l}\exp\{\gamma_l^{\dagger} + \xi_l^{\top}h(\vx_{ij})\} }{1 + \sum_{l'}\rho_{l'}[\exp\{\gamma_{l'}^{\dagger} + \xi_{l'}^{\top}h_{\eta}(g_{\theta}(x_{ij}))\}-1]}\\
&+\sum_{i,j}\frac{\sum_{l'}(\partial\rho_{l'}/\partial \gamma^{\dagger}_{l})\left[\exp(\gamma_{l'}^{\dagger} + \xi_{l'}^{\top}h_{\eta}(g_{\theta}(\vx_{ij})))-1\right]}{1 + \sum_{l'}\rho_{l'}[\exp\{\gamma_{l'}^{\dagger} + \xi_{l'}^{\top}h_{\eta}(g_{\theta}(x_{ij}))\}-1]}\\
=&~n_l-N\rho_{l}\sum_{i,j}p_{ij}\exp\{\gamma_l^{\dagger} + \xi_l^{\top}h(\vx_{ij})\} +N\sum_{l'}\frac{\partial\rho_{l'}}{\partial\gamma^{\dagger}_{l}}\sum_{i,j}p_{ij}[\exp\{\gamma_l^{\dagger} + \xi_l^{\top}h(\vx_{ij})\}-1]\\
=&~n_l-N\rho_l.
\end{split}
\end{equation*}
The last inequality is based on the constraint in~\eqref{eq:constraints}.
Hence, we have $\rho_{l} = n_l/N$ which completes the proof.

\subsection{Dual form of the profile log EL}
At the optimal value, we have $\rho_l = n_l/N$ with $N=\sum_{l=1}^{m}n_l$.
Plugin this value into the profile log-EL, we then get
\[
\begin{split}
p\ell_{N}(\zeta)=&~\sum_{i,j}\log \left\{\frac{\exp(\alpha_{y_{ij}} + \beta_{y_{ij}}^{\top}g_{\theta}(x_{ij}))}{\sum_{j}\exp(\alpha_{j} + \beta_j^{\top}g_{\theta}(x_{ij}))}\right\} + \sum_{i,j}\log\left\{\frac{\exp\{\gamma_i^{\dagger}+\xi_{i}^{\top}h_{\eta}(g_{\theta}(x_{ij}))\}}{\sum_{l=1}^{m}\frac{n_l}{N}\exp\{\gamma_l^{\dagger} + \xi_l^{\top}h_{\eta}(g_{\theta}(x_{ij}))\}}\right\}\\
=&~\sum_{i,j}\log \left\{\frac{\exp(\alpha_{y_{ij}} + \beta_{y_{ij}}^{\top}g_{\theta}(x_{ij}))}{\sum_{j}\exp(\alpha_{j} + \beta_j^{\top}g_{\theta}(x_{ij}))}\right\} + \sum_{i,j}\log\left\{\frac{(n_1/n_i)\exp\{\gamma_i^{\ddagger}+\xi_{i}^{\top}h_{\eta}(g_{\theta}(x_{ij}))\}}{\sum_{l=1}^{m}(\frac{n_1}{N})\exp\{\gamma_l^{\ddagger} + \xi_l^{\top}h_{\eta}(g_{\theta}(x_{ij}))\}}\right\}\\
=&~\sum_{i,j}\log \left\{\frac{\exp(\alpha_{y_{ij}} + \beta_{y_{ij}}^{\top}g_{\theta}(x_{ij}))}{\sum_{j}\exp(\alpha_{j} + \beta_j^{\top}g_{\theta}(x_{ij}))}\right\} + \sum_{i,j}\log\left\{\frac{(\exp\{\gamma_i^{\ddagger}+\xi_{i}^{\top}h_{\eta}(g_{\theta}(x_{ij}))\}}{\sum_{l=1}^{m}\exp\{\gamma_l^{\ddagger} + \xi_l^{\top}h_{\eta}(g_{\theta}(x_{ij}))\}}\right\}\\ 
&-\sum_{i,j}\log\left(\frac{n_i}{N}\right).
\end{split}
\]
The last term is an additive constant; the maximization does not depend on its value, which completes the proof.

%%%%%%%%%%%%%%%%%%%%%%%%%%%%%%%%%%%%%%%%%%%%%%%%%%%%%%%%%%%%%%%%%%%%%%%%%%%%%%%%%%%%%%%%%%%%%%%%%%%%%%%%%%%%%%%%%%%%%%%%%%%%%%%%%%%
\section{Generalization to other types of data heterogeneity}
\label{app:generalization}
In this section, we detail how our method generalizes to the setting where both $Y|X$ and $X$ differ across clients. 
Recall that the log empirical likelihood function is
\[
\begin{split}
\ell_{N}(p, \zeta) =&~\sum_{i=1}^{m}\sum_{j=1}^{n_i}\log P_{X,Y}^{(i)}(\{X_{ij},Y_{ij}\})\\
=&~\sum_{i,j,k}\mathbbm{1}(Y_{ij}=k)\log \mathbb{P}^{(i)}(Y=k|X_{ij}) + \sum_{i,j}\log P_{X}^{(i)}(\{X_{ij}\})
\end{split}
\]
We assume that each client has its own linear head for the conditional distribution:
\[
\sP^{(i)}(Y=k|X=x) = \frac{\exp(\alpha_{ik}+\beta_{ik}^{\top}g_{\theta}(x))}{\sum_{j}\exp(\alpha_{ikj}+\beta_{ij}^{\top}g_{\theta}(x))},
\]
The marginal distributions $P_X^{(i)}$ are linked as in Theorem~\ref{thm:marginal_drm}:
\[
\frac{dP_X^{(i)}}{dP_X^{(0)}}(x) = \exp\{\gamma_i^{\dagger} + \xi_i^{\top}h_{\tau}(g_{\theta}(x))\}
\]
where $P_X^{(0)}$ is an unspecified reference measure. 
Using a non-parametric reference distribution, we set
\[
p_{ij} = P_{X}^{(0)}(\{X_{ij}\}) \geq 0, \quad \forall i\in[m],~j\in[n_i],
\] 
subject to the constraints
\[
\sum_{i=1}^{m}\sum_{j=1}^{n_i} p_{ij} = 1, \quad 
\sum_{i=1}^{m}\sum_{j=1}^{n_i} p_{ij}\,\exp\big\{\gamma_l^{\dagger} + \xi_l^{\top} h_{\tau}(g_{\theta}(X_{ij}))\big\} = 1, \quad \forall~l\in [m].
\]
Then, the log empirical likelihood across clients is
\[
\ell_{N}(p, \zeta)
=\sum_{i,j,k}\mathbbm{1}(Y_{ij}=k)\log \mathbb{P}^{(i)}(Y=k|X_{ij}) + \sum_{i,j}\{\gamma_i^{\dagger}+\xi_{i}^{\top}h_{\tau}(g_{\theta}(X_{ij}))+\log p_{ij}\}.
\]
The profile log-EL of $\zeta$ is defined as
\[p\ell_{N}(\zeta) = \sup_{p} \ell_{N}(p, \zeta)\]
where the supremum is taken under the constraints above. 
Applying the method of Lagrange multipliers, we obtain the analytical form
\[
p\ell_{N}(\zeta)=\sum_{i,j,k}\mathbbm{1}(Y_{ij}=k)\log \mathbb{P}^{(i)}(Y_{ij}=k|X_{ij}) + \sum_{i,j}\{\gamma_i^{\dagger}+\xi_{i}^{\top}h_{\tau}(g_{\theta}(x_{ij})) + \log p_{ij}(\zeta)\}
\]
where 
\[
p_{ij}(\zeta)=N^{-1}\Big\{1+\sum_{l=1}^{m}\rho_{l} \left[\exp\{\gamma_l^{\dagger} + \xi_l^{\top}h_{\tau}(g_{\theta}(x_{ij}))\}-1\right]\Big\}^{-1}
\]
and the Lagrange multipliers $\{\rho_{l}\}_{l=1}^{m}$ solves
\[\sum_{i,j} \frac{\exp\{\gamma_l^{\dagger} + \xi_l^{\top}h_{\tau}(g_{\theta}(x_{ij}))\} -1}{\sum_{l'}\rho_{l'} \left[\exp(\gamma_{l'}^{\dagger} + \xi_{l'}^{\top}h_{\tau}(g_{\theta}(x_{ij})))-1\right]} = 0.\]
Using the dual argument from Appendix~\ref{app:profile_log_EL_dual}, the profile log-EL can be rewritten as
\[
\begin{split}
p\ell_{N}(\zeta)=&~\sum_{i,j}\log \left\{\frac{\exp(\alpha_{i,y_{ij}} + \beta_{i,y_{ij}}^{\top}g_{\theta}(x_{ij}))}{\sum_{j}\exp(\alpha_{ij} + \beta_{ij}^{\top}g_{\theta}(x_{ij}))}\right\} + \sum_{i,j}\log\left\{\frac{\exp\{\gamma_i^{\dagger}+\xi_{i}^{\top}h_{\eta}(g_{\theta}(x_{ij}))\}}{\sum_{l=1}^{m}\frac{n_l}{N}\exp\{\gamma_l^{\dagger} + \xi_l^{\top}h_{\eta}(g_{\theta}(x_{ij}))\}}\right\}\\
=&~\sum_{i,j}\log \left\{\frac{\exp(\alpha_{i,y_{ij}} + \beta_{i,y_{ij}}^{\top}g_{\theta}(x_{ij}))}{\sum_{j}\exp(\alpha_{ij} + \beta_{ij}^{\top}g_{\theta}(x_{ij}))}\right\} + \sum_{i,j}\log\left\{\frac{(n_1/n_i)\exp\{\gamma_i^{\ddagger}+\xi_{i}^{\top}h_{\eta}(g_{\theta}(x_{ij}))\}}{\sum_{l=1}^{m}(\frac{n_1}{N})\exp\{\gamma_l^{\ddagger} + \xi_l^{\top}h_{\eta}(g_{\theta}(x_{ij}))\}}\right\}\\
=&~\sum_{i,j}\log \left\{\frac{\exp(\alpha_{i,y_{ij}} + \beta_{i,y_{ij}}^{\top}g_{\theta}(x_{ij}))}{\sum_{j}\exp(\alpha_{ij} + \beta_{ij}^{\top}g_{\theta}(x_{ij}))}\right\} + \sum_{i,j}\log\left\{\frac{(\exp\{\gamma_i^{\ddagger}+\xi_{i}^{\top}h_{\eta}(g_{\theta}(x_{ij}))\}}{\sum_{l=1}^{m}\exp\{\gamma_l^{\ddagger} + \xi_l^{\top}h_{\eta}(g_{\theta}(x_{ij}))\}}\right\}\\ 
&-\sum_{i,j}\log\left(\frac{n_i}{N}\right).
\end{split}
\]
As a result, the loss function remains additive in two cross-entropy terms corresponding to different tasks. 
The key difference from the covariate shift case is that, for the client-classification task, each client now has its own linear head.

%%%%%%%%%%%%%%%%%%%%%%%%%%%%%%%%%%%%%%%%%%%%%%%%%%%%%%%%%%%%%%%%%%%%%%%%%%%%%%%%%%%%%%%%%%%%%%%%%%%%%%%%%%%%%%%%%%%%%%%%%%%%%%%%%%%

\section{System accuracy \& convergence rate trade-off}
\label{app:proof_error_bound}
We show the proof of Theorem~\ref{thm:total_error} in this section.
To simplify the notation, we consider the following loss:
To assure strong convexity, we minimize the following objective function:
\[
\ell_i^\rho(\zeta)
=\underbrace{\frac{\rho}{2}\|\gamma\|_2^2-\frac{1-\lambda}{n_i} \sum_{j=1}^{n_i} \log\frac{\exp(\gamma_{i}^\top z_{ij})}{\sum_{q=1}^m \exp(\gamma_q^\top z_{ij})}}_{=:\ell_i^{\rho,\mathrm{client}}(\gamma)}+
 \underbrace{\frac{\rho}{2}\|\beta\|_2^2- \frac{\lambda}{n_i} \sum_{j=1}^{n_i} \log\frac{\exp(\beta_{y_{ij}}^\top x_{ij})}{\sum_{k=1}^K \exp(\beta_k^\top x_{ij})}}_{=:\ell_i^{\rho,\mathrm{class}}(\beta)},  
\]
Then, where $\zeta=(\gamma,\beta)$ stacks the parameters for client-classification $\gamma$ and task-classification $\beta$. the global objective is $\ell^{\rho}(\zeta) =m^{-1}\sum_{i=1}^m \ell_i^\rho(\zeta)$. 

\textbf{Problem setting:} Assume the data is generated according to the true multinomial logistic model with parameters $\zeta^{\text{true}}=(\gamma^{\text{true}},\beta^{\text{true}})$.
i.e.,
\[
\Pr(Y_{\mathrm{client}}=q | z) = \frac{\exp(\gamma_q^{\ast\top} z)}{\sum_{r=1}^m \exp(\gamma_r^{\ast\top} z)}, 
\qquad
\Pr(Y_{\mathrm{class}}=k | x) = \frac{\exp(\beta_k^{\ast\top} x)}{\sum_{r=1}^K \exp(\beta_r^{\ast\top} x)}.
\]
Let $\widehat\zeta_N$ denote the minimizer of $\ell^{\rho}(\zeta)$ with $N=\sum_{i=1}^m n_i$ as the total number of samples.

\textbf{Total error.} Let $\zeta^T$ be the output of the algorithm after $T$ steps. We decompose the error using the triangle inequality as follows:
\begin{equation}
\label{eq:total_error_decomposition}
\|\zeta^T - \zeta^{\text{true}}\|_2 \leq \underbrace{\|\zeta^T - \widehat\zeta_N\|_2}_{\text{optimization error}} +
\underbrace{\|\widehat\zeta_N - \zeta^{\text{true}}\|_2}_{\text{statistical error}}.
\end{equation}
We know bound these two terms respectively.

\begin{lemma}[Asymptotic normality]
\label{thm:asymp}
As $N \to \infty$, the estimator $\widehat\zeta_N$ satisfies
\[
\sqrt{N}\,(\widehat\zeta_N - \zeta^{\text{true}})
\;\;\overset{d}{\longrightarrow}\;\;
\mathcal N\!\Big(0,\, \gI(\zeta^{\text{true}})^{-1}\Big),
\]
where the Fisher information is block diagonal: 
\[
\gI(\zeta^{\text{true}}) = 
\begin{bmatrix}
(1-\lambda)\,\gI_\gamma + \rho I & 0 \\
0 & \lambda\,\gI_\beta + \rho I
\end{bmatrix},
\]
with 
\[
\gI_\gamma = \mathbb{E}\left\{\big(\mathrm{diag}(p_\gamma(z)) - p_\gamma(z)p_\gamma(z)^\top\big)\otimes (zz^\top)\right\}, ~~
\gI_\beta = \mathbb{E}\left\{\big(\mathrm{diag}(p_\beta(x)) - p_\beta(x)p_\beta(x)^\top\big)\otimes (xx^\top)\right\},
\]
where $p_{\beta}(x) = (\exp(\beta_1^{\top}x)/\sum_{j}\exp(\beta_j^{\top}x),\ldots, \exp(\beta_{\text{dim}(\beta)}^{\top}x)/\sum_{j}\exp(\beta_j^{\top}x))^{\top}$, and $I$ is the identity matrix. 

\end{lemma}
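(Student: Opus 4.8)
The plan is to treat $\widehat\zeta_N$ as the solution of a strictly convex penalized $M$-estimation problem and to run the standard Taylor-expansion (delta-method) argument for maximum-likelihood-type estimators. Because $\ell^{\rho}$ is $\mu$-strongly convex and $L$-smooth (the assumption of Thm.~\ref{thm:total_error}), its minimizer is unique and characterized by the first-order condition $\nabla\ell^{\rho}(\widehat\zeta_N)=0$, and its Hessian is uniformly positive definite. First I would establish consistency $\widehat\zeta_N \cp \zeta^{\text{true}}$: the population objective is strongly convex with a unique minimizer, the empirical objective converges to it pointwise by a law of large numbers, and for convex objectives pointwise convergence forces uniform convergence on compacta and convergence of the argmins, giving consistency.

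With consistency in hand, I would expand the score at the truth, $0=\nabla\ell^{\rho}(\widehat\zeta_N)=\nabla\ell^{\rho}(\zeta^{\text{true}})+\nabla^{2}\ell^{\rho}(\bar\zeta)(\widehat\zeta_N-\zeta^{\text{true}})$ for some $\bar\zeta$ on the segment joining $\widehat\zeta_N$ and $\zeta^{\text{true}}$, so that $\sqrt N(\widehat\zeta_N-\zeta^{\text{true}}) = -[\nabla^{2}\ell^{\rho}(\bar\zeta)]^{-1}\sqrt N\,\nabla\ell^{\rho}(\zeta^{\text{true}})$. The two remaining ingredients are then (i) a central limit theorem for the score $\sqrt N\,\nabla\ell^{\rho}(\zeta^{\text{true}})$, whose data part is a sum of mean-zero log-softmax gradients and whose limiting covariance equals the data Fisher information by the Bartlett identity under correct specification; and (ii) a law of large numbers giving $\nabla^{2}\ell^{\rho}(\bar\zeta)\cp \gI(\zeta^{\text{true}})$, using $\bar\zeta\cp\zeta^{\text{true}}$ and continuity of the Hessian. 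Slutsky's theorem then yields asymptotic normality, and under the information identity the sandwich $\gI^{-1}\,\Cov\,\gI^{-1}$ collapses to $\gI(\zeta^{\text{true}})^{-1}$, modulo the penalty caveat discussed below.

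The block-diagonal structure is the one place where I would carry out an explicit computation. Since $\ell^{\rho}$ splits additively as $\ell_i^{\rho,\mathrm{client}}(\gamma)+\ell_i^{\rho,\mathrm{class}}(\beta)$, with the $\gamma$-block depending only on the embeddings $z=h_\tau(g_\theta(x))$ and the client index and the $\beta$-block only on $x=g_\theta(x)$ and the class label, all mixed second derivatives $\partial^2/\partial\gamma\,\partial\beta$ vanish; hence both the score covariance and the Hessian are block-diagonal and the two blocks are asymptotically independent. Differentiating the log-softmax twice produces the familiar multinomial-logistic curvature $(\mathrm{diag}(p_\gamma(z))-p_\gamma(z)p_\gamma(z)^\top)\otimes zz^\top$ (and the analogue for $\beta$), which after taking expectations, inserting the weights $1-\lambda$ and $\lambda$, and adding the $\rho I$ from the quadratic penalty reproduces the stated $\gI_\gamma$ and $\gI_\beta$.

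The main obstacle is the fixed ridge penalty $\tfrac{\rho}{2}\|\zeta\|_2^2$. Its gradient adds a deterministic, non-vanishing term $\rho\,\zeta^{\text{true}}$ to the score at the truth, so the penalized estimator is biased by $O(\rho)$ and strictly converges to the minimizer of the \emph{penalized} population risk rather than to $\zeta^{\text{true}}$. Two readings make the statement rigorous: treat $\rho$ as a fixed but small constant and carry the $O(\rho)$ bias, in which case the Hessian legitimately retains the $\rho I$ term appearing in $\gI$; or take $\rho=\rho_N\to 0$ with $\sqrt N\,\rho_N\to 0$, so the bias is $o_p(N^{-1/2})$ and the $\rho I$ term is a lower-order correction. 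I would fix which regime is intended before beginning the expansion. A secondary technical point is that the pooled samples are not identically distributed across clients, so the CLT in step (i) must be a Lindeberg--Feller statement for a triangular array; under the stability condition $n_i/N\to w_i$, the limiting information $\gI(\zeta^{\text{true}})$ is the associated weighted average of per-client informations, which is what makes the limiting covariance well defined.
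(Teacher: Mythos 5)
Your proposal takes the same route as the paper, which in fact offers no argument at all: its entire proof is a one-line citation to van der Vaart (Section 5.5) for "well-established asymptotic properties of maximum likelihood estimators." The machinery you lay out — consistency via convexity, score expansion, a Lindeberg--Feller CLT for the pooled non-identically-distributed sample, an LLN for the Hessian, and the explicit multinomial-logistic curvature computation giving the block structure — is exactly what that citation is standing in for, so on method you match, and your execution is considerably more careful than the paper's.

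That said, your two caveats are not pedantry; they are genuine defects of the lemma as stated, and your sandwich computation, pushed to completion, exposes one more. First, as you note, with fixed $\rho>0$ the penalized estimator converges to the minimizer of the \emph{penalized} population risk, so centering at $\zeta^{\text{true}}$ with $\sqrt N$ scaling requires either $\rho=\rho_N$ with $\sqrt N \rho_N \to 0$ or carrying an $O(\rho)$ bias; the paper does neither. Second, the collapse of the sandwich fails here even at $\rho=0$, because the $\lambda$-weighted objective is not a log-likelihood and Bartlett's identity does not apply to it: the score covariance for the $\gamma$-block is $(1-\lambda)^2\gI_\gamma$ while the limiting Hessian is $(1-\lambda)\gI_\gamma+\rho I$, so the true limiting covariance is $\{(1-\lambda)\gI_\gamma+\rho I\}^{-1}(1-\lambda)^2\gI_\gamma\{(1-\lambda)\gI_\gamma+\rho I\}^{-1}$, not the stated $\{(1-\lambda)\gI_\gamma+\rho I\}^{-1}$. (At $\rho=0$ the weights cancel and the limit is simply $\gI_\gamma^{-1}$, independent of $\lambda$, which the stated form contradicts.) The saving grace, worth recording if you write this up, is the positive-semidefinite ordering $\{(1-\lambda)\gI_\gamma+\rho I\}^{-1}(1-\lambda)^2\gI_\gamma\{(1-\lambda)\gI_\gamma+\rho I\}^{-1} \preceq \{(1-\lambda)\gI_\gamma+\rho I\}^{-1}$, since $(1-\lambda)^2\gI_\gamma \preceq (1-\lambda)\gI_\gamma+\rho I$. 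So the paper's expression is valid as an upper bound on the asymptotic covariance, which is all that the statistical-error term in Theorem~\ref{thm:total_error} actually requires — but your plan, carried out honestly, proves the sandwich form rather than the lemma as written.
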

\begin{proof}
This result follows from the well-established asymptotic properties of maximum likelihood estimators~\citep[Section 5.5]{van2000asymptotic}. 
\end{proof}

\textbf{Statistical error}. By Lemma~\ref{thm:asymp}, we have 
\begin{equation}
\label{eq:statistical_error}
N\|\widehat\gamma_N - \gamma^{\text{true}}\|^2 = O_p\Big(\frac{d}{(1-\lambda) \|\gI_\gamma\|_{\min} + \rho}\Big),\qquad
N\|\widehat\beta_N - \beta^{\text{true}}\|^2 = O_p\Big(\frac{p}{\lambda \|\gI_\beta\|_{\min} + \rho}\Big),
\end{equation}
where $\|A\|_{\min} = \lambda_{\min}(A)$ is the operator norm, $d$ and $p$ are dimensions of $z$ and $x$ respectively.

\textbf{Optimization error.}
For communication round $t=0,1,2,\dots,T-1$, the server holds $\zeta^t$ and each client $i$ sets $\zeta_{i,0}^t=\zeta^t$ and performs $E$ local gradient steps:
\[
\zeta_{i,r+1}^t \;=\; \zeta_{i,r}^t - \eta \nabla \ell_i^{\rho}(\zeta_{i,r}^t),\qquad r=0,\dots,E-1.
\]
After $E$ steps each client returns $\zeta_{i,E}^t$ and the server aggregates $\zeta^{t+1}=m^{-1}\sum_{i=1}^m \zeta_{i,E}^t.$

Define $G^2(\zeta)=m^{-1}\sum_{i=1}^m \|\nabla \ell_i^\rho(\zeta) - \nabla \ell^\rho(\zeta)\|_2^2$. 
It can be decomposed nicely as
$G^2(\zeta) = 
(1-\lambda)^2G_{\text{client}}^2(\gamma) +
\lambda^2 G_{\text{class}}^2(\beta)$.
Let $\bar{G}^2$, $\bar{G}_{\text{client}}^2$, and $\bar{G}_{\text{class}}^2$ denote the corresponding maximum values across updating rounds $t=0,2,\dots,T-1$. 
Then, $\bar{G}^2 \leq 
(1-\lambda)^2 \bar{G}_{\text{client}}^2 + \lambda^2\bar{G}_{\text{class}}^2$.

In the convergence proof below, we omit the subscript $\rho$ since it does not influence the convergence rate. Because $\ell$ is $\mu$-strongly convex and $L$-smooth, a single full-gradient step satisfies
\begin{align*}
\|x - \eta\nabla \ell(x)-\widehat\zeta_N\|_2^2
= & \|x-\widehat\zeta_N\|_2^2 - 2\eta\langle \nabla \ell(x), x-\widehat\zeta_N\rangle + \eta^2 \|\nabla \ell(x)\|_2^2 \\
\leq & \|x-\widehat\zeta_N\|_2^2 - 2\eta \mu \|x-\widehat\zeta_N\|_2^2 +\eta^2 L^2\|x-\widehat\zeta_N\|_2^2 \\
\leq & (1-\eta\mu)\|x-\widehat\zeta_N\|_2^2.
\end{align*}

For client $i$ at local step $r$:
\begin{align*}
\|\zeta_{i,r+1}^t - \widehat\zeta_N\|_2^2
= & \|(\zeta_{i,r}^t - \eta \nabla \ell(\zeta_{i,r}^t) - \widehat\zeta_N)
\;+\; \eta (\nabla \ell(\zeta_{i,r}^t) - \nabla \ell_i(\zeta_{i,r}^t))\|^2 \\
\leq & (1-\eta\mu)\|\zeta_{i,r}^t - \widehat\zeta_N\|_2^2 + \eta^2 \| \nabla \ell(\zeta_{i,r}^t) - \nabla \ell_i(\zeta_{i,r}^t)\|^2
\end{align*}

Iterating over $E$ local steps gives
\[
\|\zeta_{i,E}^t - \widehat\zeta_N\|_2^2 
\le (1-\eta\mu)^E \|\zeta^t - \widehat\zeta_N\|_2^2 
+ \eta^2 \sum_{r=0}^{E-1} (1-\eta\mu)^{E-1-r} \|\nabla \ell_i(\zeta_{i,r}^t) - \nabla \ell(\zeta_{i,r}^t)\|_2^2.
\]

Averaging over $i=1,\dots,m$ and using convexity of squared norm:
\[
\|\zeta^{t+1}-\widehat\zeta_N\|_2^2
\le (1-\eta\mu)^E \|\zeta^t-\widehat\zeta_N\|_2^2
+ \eta^2 \sum_{r=0}^{E-1} \frac{1}{m} \sum_{i=1}^m \|\nabla \ell_i(\zeta_{i,r}^t) - \nabla \ell(\zeta_{i,r}^t)\|_2^2.
\]

Using $L$-smoothness and $\eta L E \le 1/4$, one can show (via induction on $r$ and triangle inequalities)
\[
\frac{1}{m} \sum_{i=1}^m \|\nabla \ell_i(\zeta_{i,r}^t) - \nabla \ell(\zeta_{i,r}^t)\|_2^2 \le \bar{G}^2,
\]
where $\bar{G}^2$ is the heterogeneity measure. Summing over $r=0,\dots,E-1$ gives
\[
\eta^2 \sum_{r=0}^{E-1} \frac{1}{m} \sum_{i=1}^m \| \nabla \ell_i(\zeta_{i,r}^t) - \nabla \ell(\zeta_{i,r}^t) \|_2^2 \le  \eta^2 E^2 \bar{G}^2.
\]

Combine the above:
\[
\|\zeta^{t+1}-\widehat\zeta_N\|_2^2 \le (1-\eta\mu)^E \|\zeta^t-\widehat\zeta_N\|_2^2 + \eta^2 E^2 \bar{G}^2.
\]

Let $s_t := \|\zeta^t-\widehat\zeta_N\|_2^2$ and $\alpha := (1-\eta\mu)^E$, $B:= \eta^2 E^2 \bar{G}^2$. Then
\[
s_{t+1} \le \alpha s_t + B \quad \Rightarrow \quad s_T \le \alpha^T s_0 + B \sum_{j=0}^{T-1} \alpha^j = \alpha^T s_0 + \frac{B (1-\alpha^T)}{1-\alpha} \le \alpha^T s_0 + \frac{B}{1-\alpha}.
\]
This yields the desired bound

\begin{equation}
\label{eq:T_rounds_exact}
\|\zeta^T - \widehat\zeta_N\|_2^2 \le (1-\eta\mu)^{ET} \|\zeta^0 - \widehat\zeta_N\|_2^2 + \frac{\eta^2 E^2 \bar{G}^2}{1-(1-\eta\mu)^E}.
\end{equation}

Since $1-(1-\eta\mu)^E \ge 1 - e^{-\eta\mu E} \ge \tfrac12 \min\{1, \eta\mu E\}$, the steady-state error is of order $O(\eta^2 E^2 \bar{G}^2/( \eta\mu E)) = O(\eta E \bar{G}^2/\mu)$, i.e., FedAvg converges linearly to a neighborhood of radius proportional to $\sqrt{\eta E \bar{G}^2/\mu}$.

Combining~\eqref{eq:statistical_error} and~\eqref{eq:T_rounds_exact} with~\eqref{eq:total_error_decomposition} gives the final result that
\[
\|\zeta^T - \zeta^{\text{true}}\|^2
= O_p\left(\frac{\{(1-\lambda)\|\gI_\gamma\|_{\min}+\rho\}^{-1}+\{\lambda\|\gI_\beta\|_{\min}+\rho\}^{-1}}{N}
+\frac{\eta^2 E^2\bar{G}^2}{1-(1-\eta\mu)^E}\right),
\]
as both $T, N\to \infty$, 
This along with $\bar G^2 \leq (1-\lambda)^2 \bar{G}_{\text{client}}^2 + \lambda^2 \bar{G}_{\text{class}}^2$ completes the proof of the theorem.

%%%%%%%%%%%%%%%%%%%%%%%%%%%%%%%%%%%%%%%%%%%%%%%%%%%%%%%%%%%%%%%%%%%%%%%%%%%%%%%%%%%%%%%%%%%%%%%%%%%%%%%%%%%%%%%
\section{Experiment Details}
% \subsection{Implementation Details}
%%%%%%%%%%%%%%%%%%%%%%%%%%%%%%%%%%%%%%%%%%%%%%%%%%%%%%%%%%%%%%%%%%%%%%%%%%%
\subsection{Visualization of covariate shift and label shift}
\label{app:dataset_details}
In the main experiment, we simulate covariate shift by applying three distinct nonlinear transformations to each client's dataset. 
Specifically, we use gamma correction with $\gamma\in\{0.6, 1.4\}$, hue adjustment with $\Delta h\in\{-0.1, 0.1\}$, and saturation scaling with $\kappa\in\{0.5, 1.5\}$. 
This creates $2^3=8$ unique combinations of transformations, corresponding to an $8$-client setting where each client possesses a visually distinct data distribution. 
A visualization of a single image sampled from CIFAR-10 after applying these transformations is shown in Fig.~\ref{fig:covariate_shift_main}. 
As can be clearly seen, the resulting differences in feature distributions across clients are visually striking, highlighting the significant covariate shift simulated in our experiments.
\begin{figure}[!ht]
\centering
\includegraphics[width=0.8\textwidth]{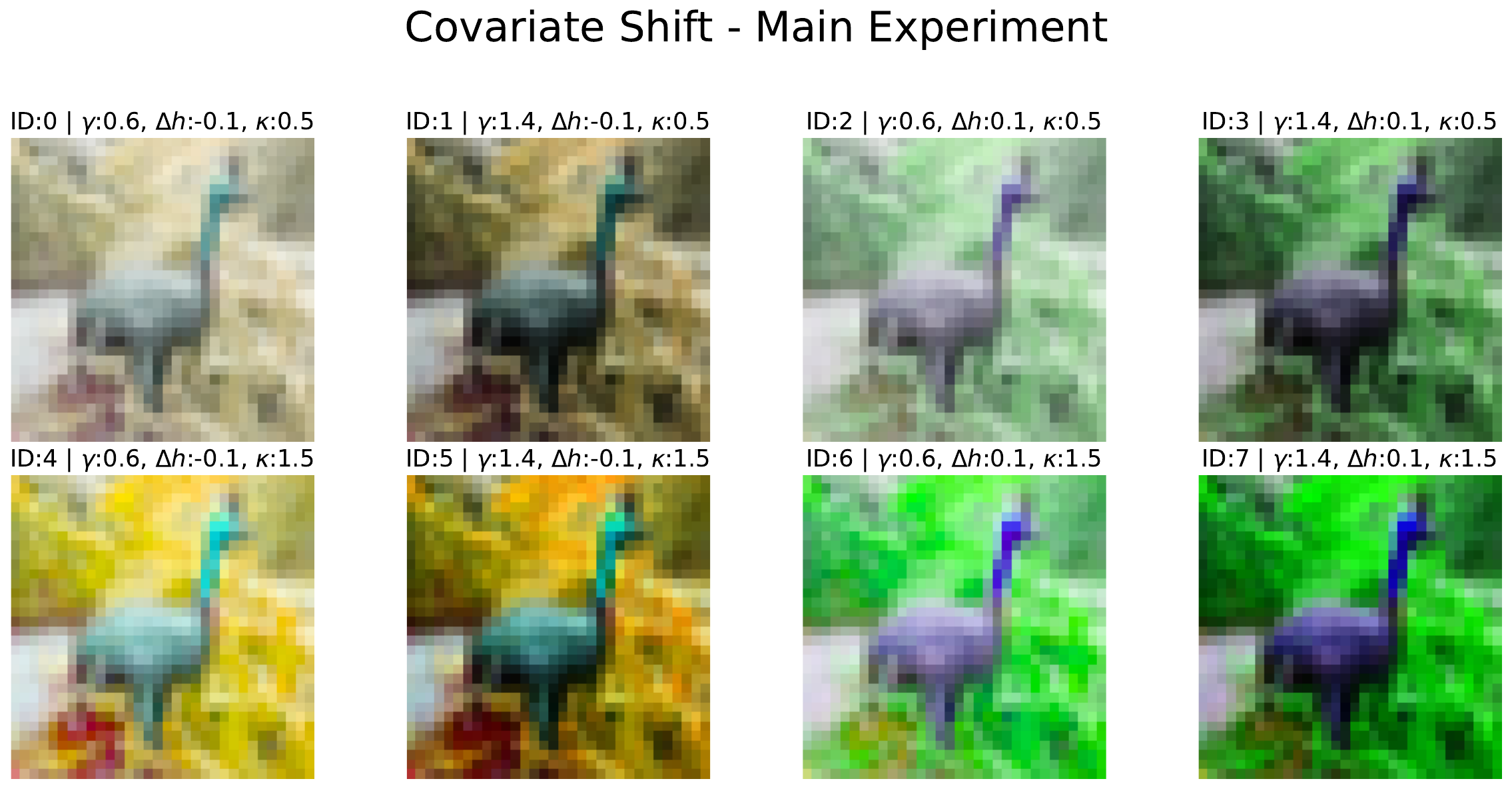}
\caption{Visualization of a sample from CIFAR-$10$ under various nonlinear transformations.}
\label{fig:covariate_shift_main}
\end{figure}
We visualize the two types of label shift used in the main experiment in Fig.~\ref{fig:label_shift_visualization}.  
The figures show the number of samples from each class across $8$ clients.  
As observed, the $5$-SPC setting assigns at most $5$ classes to each client, whereas the Dir-$0.3$ setting distributes more classes per client.  
Thus, the label shift under Dir-$0.3$ is less severe than under $5$-SPC.
\begin{figure}[!ht]
\centering 
\includegraphics[width=0.4\linewidth]{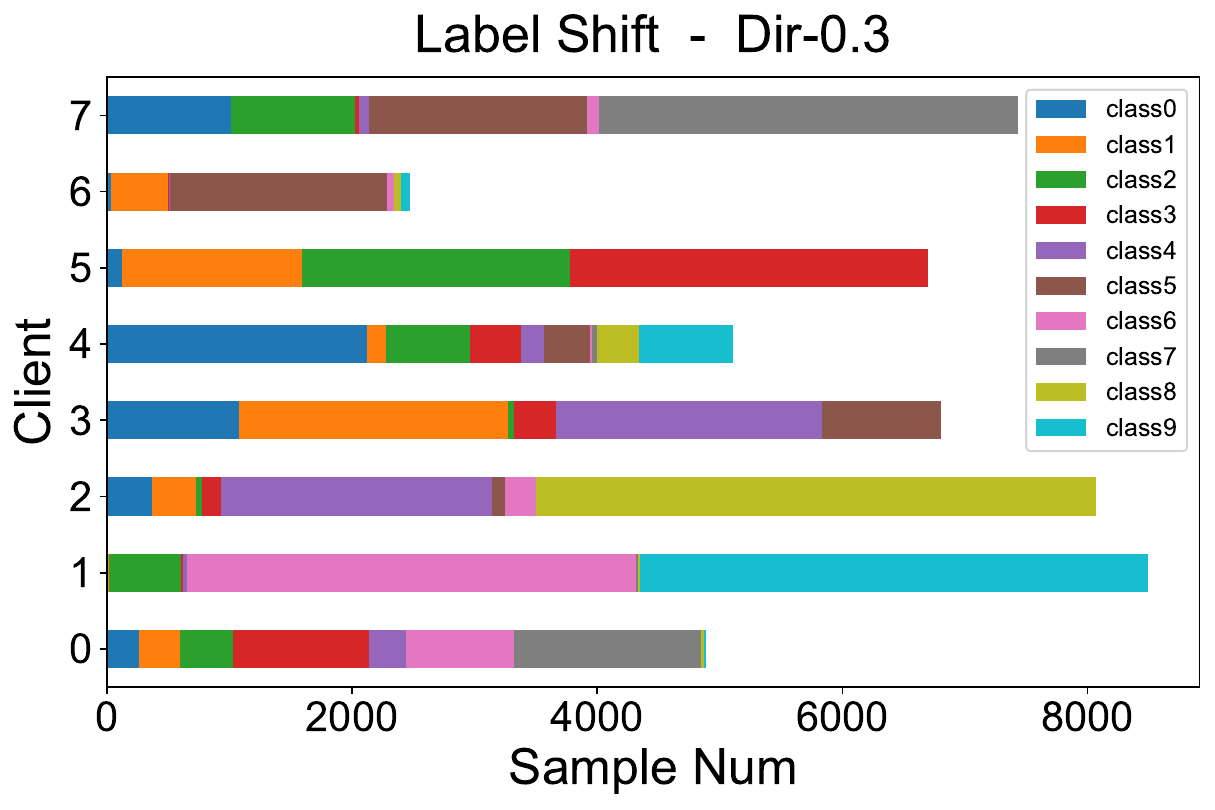}
\includegraphics[width=0.4\linewidth]{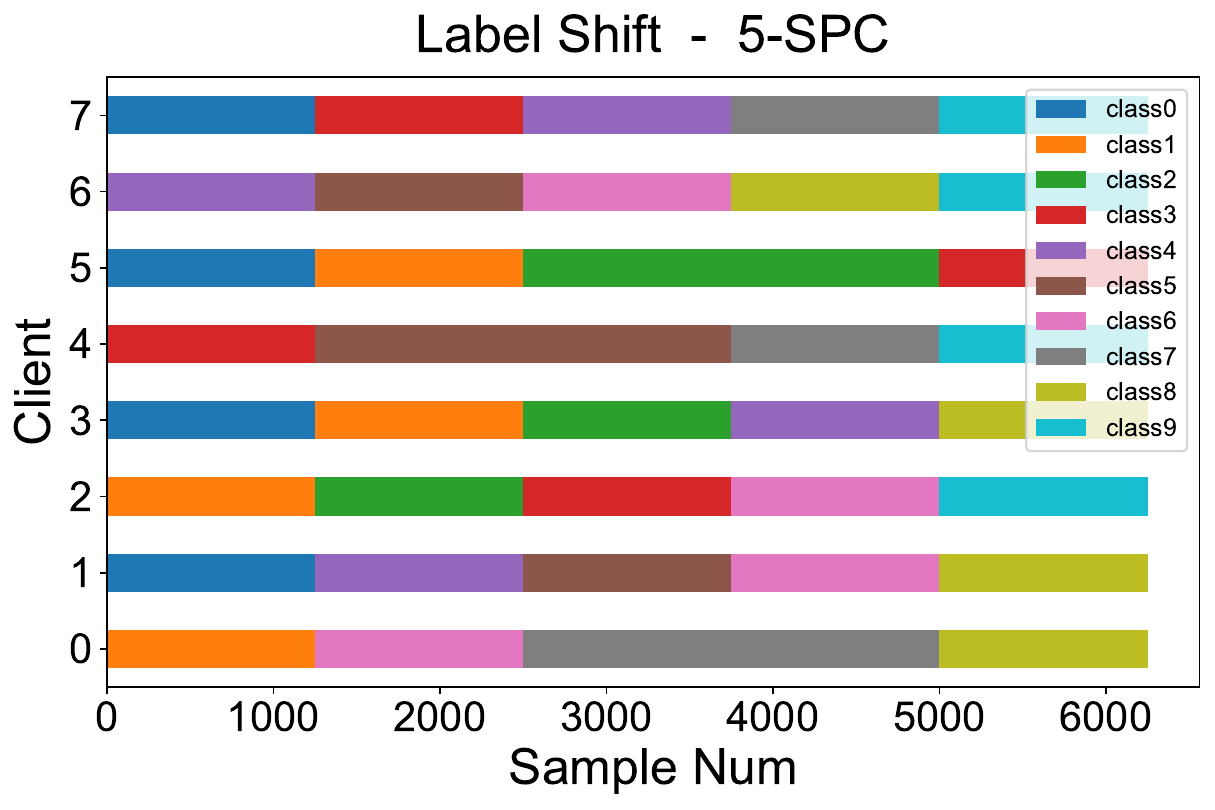}
\caption{Visualization of client data distribution on CIFAR-$10$ under Dir-$0.3$ and $5$-SPC settings.}
\label{fig:label_shift_visualization} 
\end{figure}
Our experimental results confirm this observation: all methods achieve higher performance under the less severe Dir-$0.3$ case.

%%%%%%%%%%%%%%%%%%%%%%%%%%%%%%%%%%%%%%%%%%%%%%%%%%%%%%%%%%%%%%%%%%%%%%%%%%%
\subsection{Sensitivity Analysis Details}
\label{app:sensitivity_analysis}
For all subsequent sensitivity analyses, unless otherwise specified, we use CIFAR-$10$ under the Dir-$0.3$ setting.  
The details of each experiment are provided below.

%%%%%%%%%%%%%%%%%%%%%%%%%%%%%%%%%%%%%
\subsubsection{Number of clients}
To investigate the impact of the number of clients, we adopt a more fine-grained strategy for simulating covariate shift. We expand the parameter space for each nonlinear transformation to three distinct values: gamma correction with $\gamma\in \{0.6, 1.0, 1.4\}$, hue adjustment with $\Delta h\in \{-0.15, 0.0, 0.15\}$, and saturation scaling with $\kappa\in \{0.4, 1.0, 1.6\}$. Furthermore, we introduce an additional binary transformation, posterization, which reduces the number of bits for each color channel to create a flattening effect on the image's color palette. A visualization of these transformations is presented in Fig.~\ref{fig:covariate_shift_num_clients}. 
In the $n$-client setting, we apply the first $n$ transformations from this pool.

In our experiments, we set the maximum number of clients to $32$. 
This is due to two primary challenges. First, as the number of clients increases, the amount of data partitioned to each client diminishes significantly. 
This data scarcity creates a scenario where fine-tuning-based methods gain an inherent advantage, as each client's local train and test distributions are identical. 
Second, it is hard to design a simulation strategy for covariate shift that is both sufficiently distinct and aligned with the model's inductive bias when the number of clients becomes very large.

\begin{figure}[!ht]
\centering
\includegraphics[width=\textwidth]{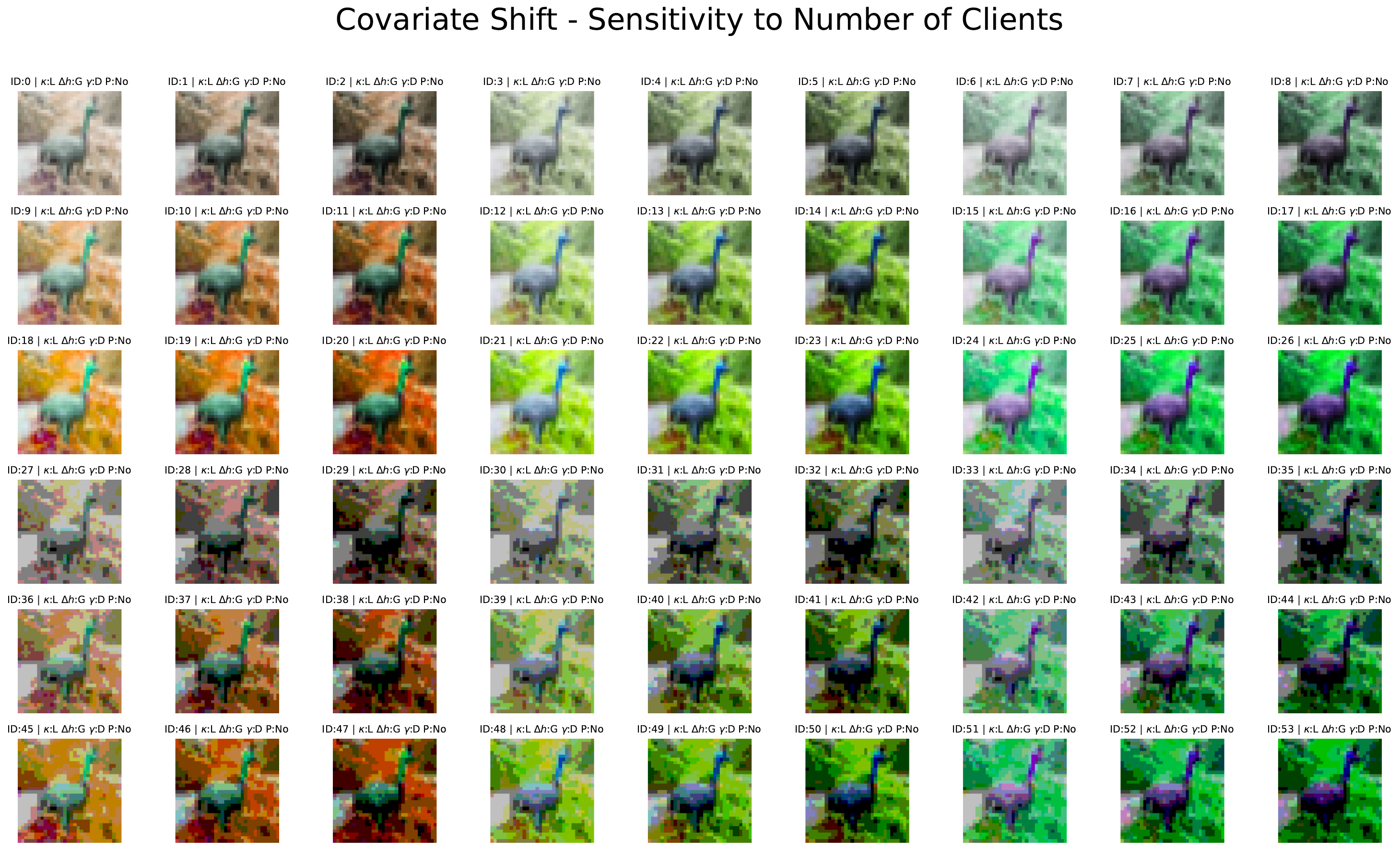}
\caption{Visualization of a CIFAR-10 sample under covariate shift with a larger number of clients.}
\label{fig:covariate_shift_num_clients}
\end{figure}

\subsubsection{Covariate shift intensity}
To evaluate the robustness of our method under varying degrees of covariate shift, we construct three intensity levels—low, mid, and high—by adjusting the parameter ranges of the nonlinear transformations.  
The specific value ranges for each level are detailed as follows:
(1) \textbf{Low}: $\gamma\in \{0.9, 1.1\}, \Delta h\in \{-0.01, 0.01\}, \kappa\in \{0.9, 1.1\}$.
(2) \textbf{Mid}: $\gamma\in \{0.75, 1.25\}, \Delta h\in \{-0.05, 0.05\}, \kappa\in \{0.7, 1.3\}$.
(3) \textbf{High}: $\gamma\in \{0.6, 1.4\}, \Delta h\in \{-0.1, 0.1\}, \kappa\in \{0.5, 1.5\}$.
Visualizations corresponding to these levels are presented in Fig.~\ref{fig:covariate_shift_low}, Fig.~\ref{fig:covariate_shift_mid}, and Fig.~\ref{fig:covariate_shift_main}. 
It can be seen that the induced covariate shift is nearly imperceptible at the low level and escalates to a stark distinction at the high level, clearly illustrating the progressive intensity of the shift.
\begin{figure}[!ht]
\centering
\includegraphics[width=0.8\textwidth]{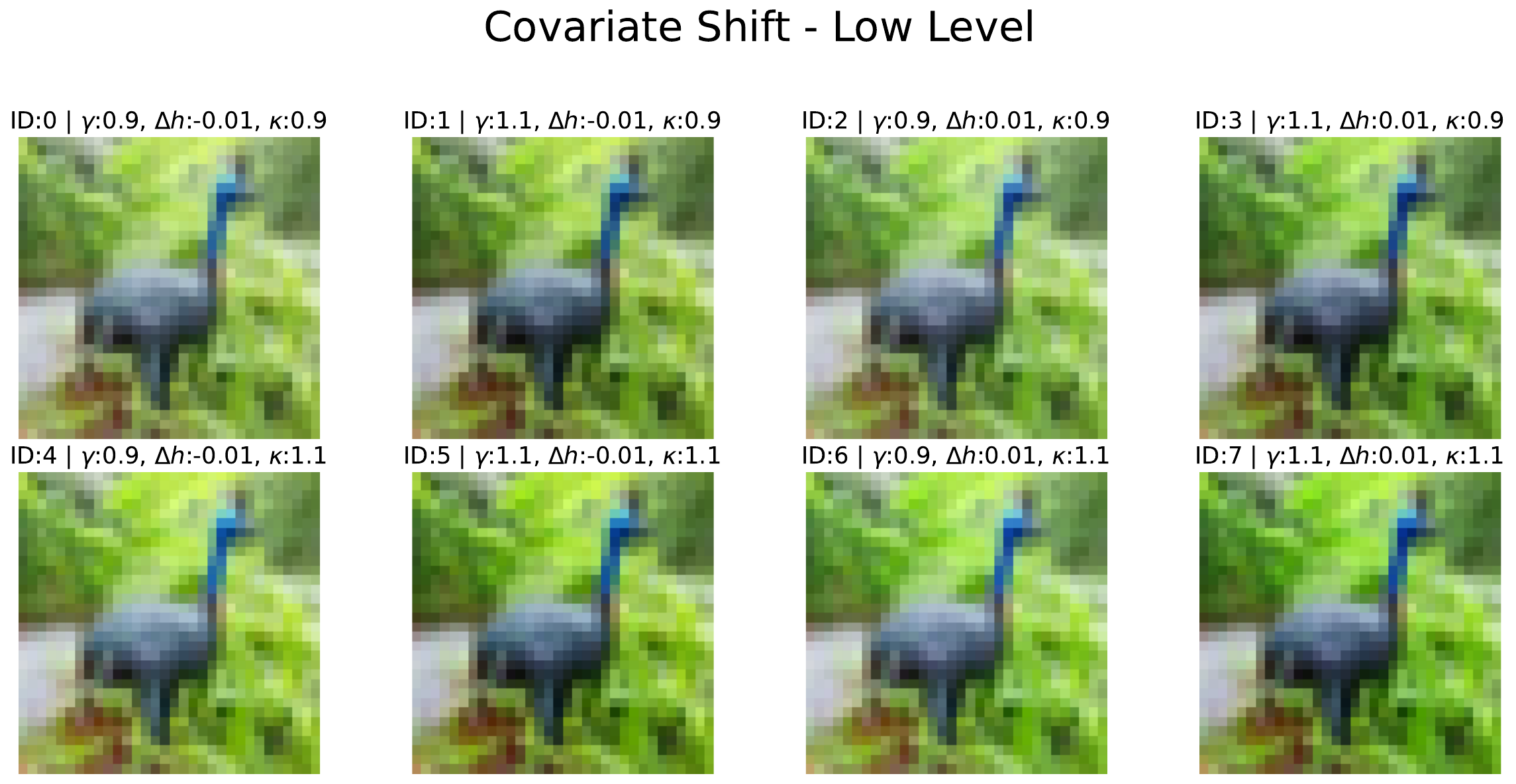}
\caption{Visualization of a sample from CIFAR-10 under low covariate shift intensity.}
\label{fig:covariate_shift_low}
\end{figure}

\begin{figure}[!ht]
\centering
\includegraphics[width=0.8\textwidth]{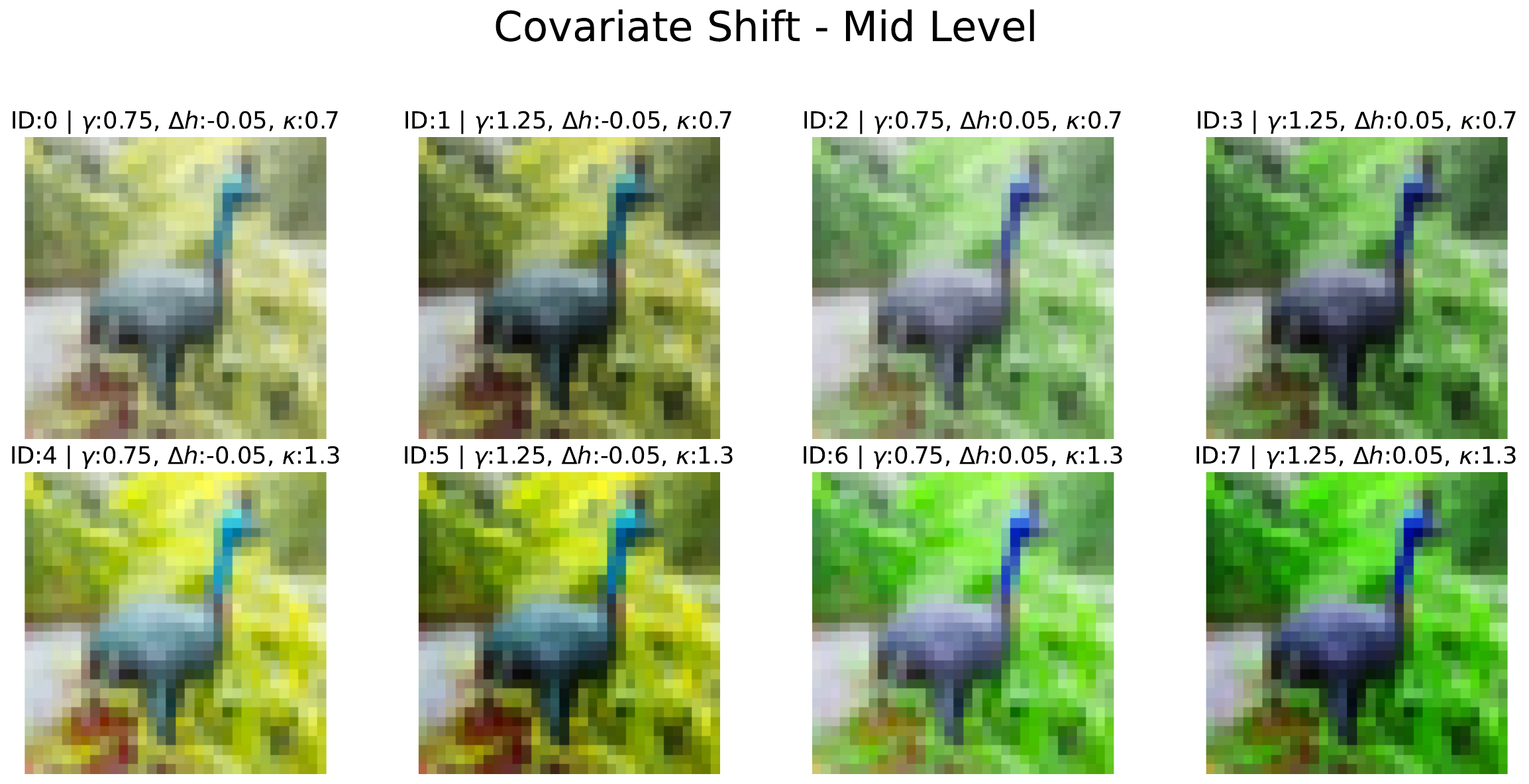}
\caption{Visualization of a sample from CIFAR-10 under mid covariate shift intensity.}
\label{fig:covariate_shift_mid}
\end{figure}

%%%%%%%%%%%%%%%%%%%%%%%%%%%%%%%%%%%%%
\subsubsection{Backbone sharing strategy}
In our formulation, the target-class classification task uses the embedding $g_{\theta}(x)$ for an input feature $x$, while the client-classification task uses the embedding $h_{\tau}(g_{\theta}(x))$ for the same feature.  
Since both $g_{\theta}$ and $h_{\tau}$ are parameterized functions, the optimal sharing strategy between the two is not obvious. 
To explore this, we investigate four backbone-sharing strategies based on LeNet: no sharing, shallow sharing, mid sharing, and deep sharing.  
The network architectures are illustrated in Fig.~\ref{fig:sharing_strategies}.  
From (a) to (c), the discrepancy between the embeddings for the two tasks decreases, while the number of learnable parameters also reduces.
\begin{figure}[!ht] 
\centering
\begin{subfigure}[b]{0.49\textwidth}
    \centering
    \includegraphics[width=\linewidth]{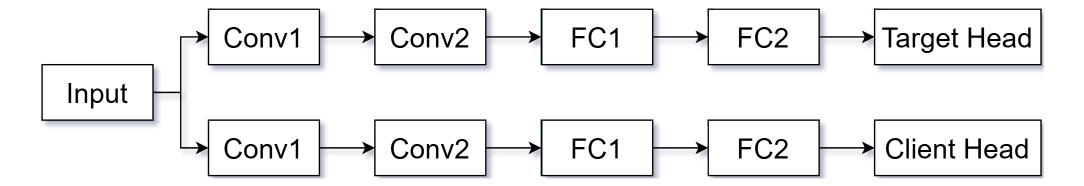}
    \caption{No Sharing}
    \label{fig:no_sharing}
\end{subfigure}
\hfill 
\begin{subfigure}[b]{0.49\textwidth}
    \centering
    \includegraphics[width=\linewidth]{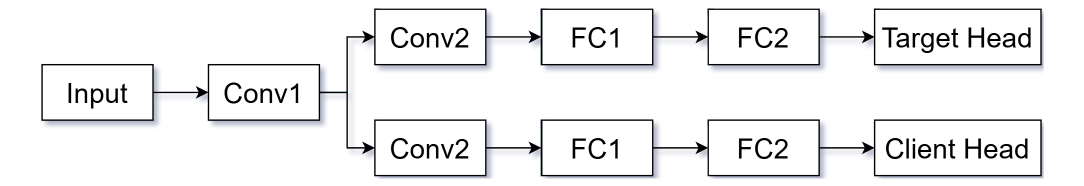}
    \caption{Shallow Sharing}
\end{subfigure}
\vspace{0.5cm} 
\begin{subfigure}{0.49\textwidth}
    \centering
    \includegraphics[width=\linewidth]{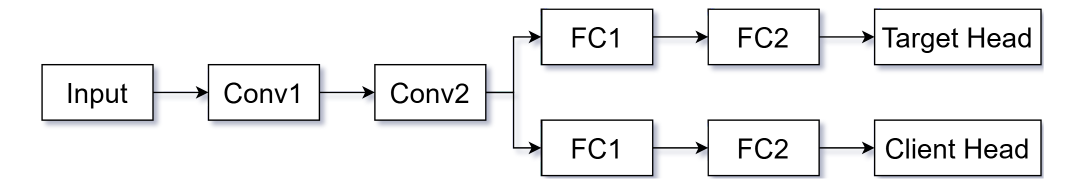}
    \caption{Mid Sharing}
\end{subfigure}
\hfill
\begin{subfigure}{0.49\textwidth}
    \centering
    \includegraphics[width=\linewidth]{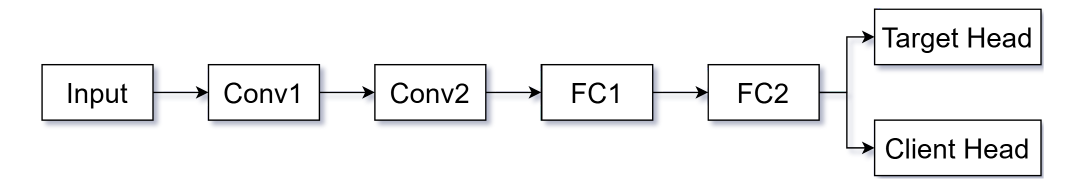}
    \caption{Deep Sharing}
\end{subfigure}
\caption{Visualization of the four different parameter sharing strategies.}
\label{fig:sharing_strategies}
\end{figure}
Our empirical results in Fig.~\ref{fig:sharing_strategy} show that all strategies perform similarly, with shallow sharing slightly ahead. 
However, given the substantial increase in parameters for shallow sharing, deep sharing offers a more parameter-efficient alternative while maintaining strong performance.

\begin{figure}[!ht]
\centering 
\includegraphics[width=0.32\linewidth]{figures/dir-0.3.pdf}
\includegraphics[width=0.32\linewidth]{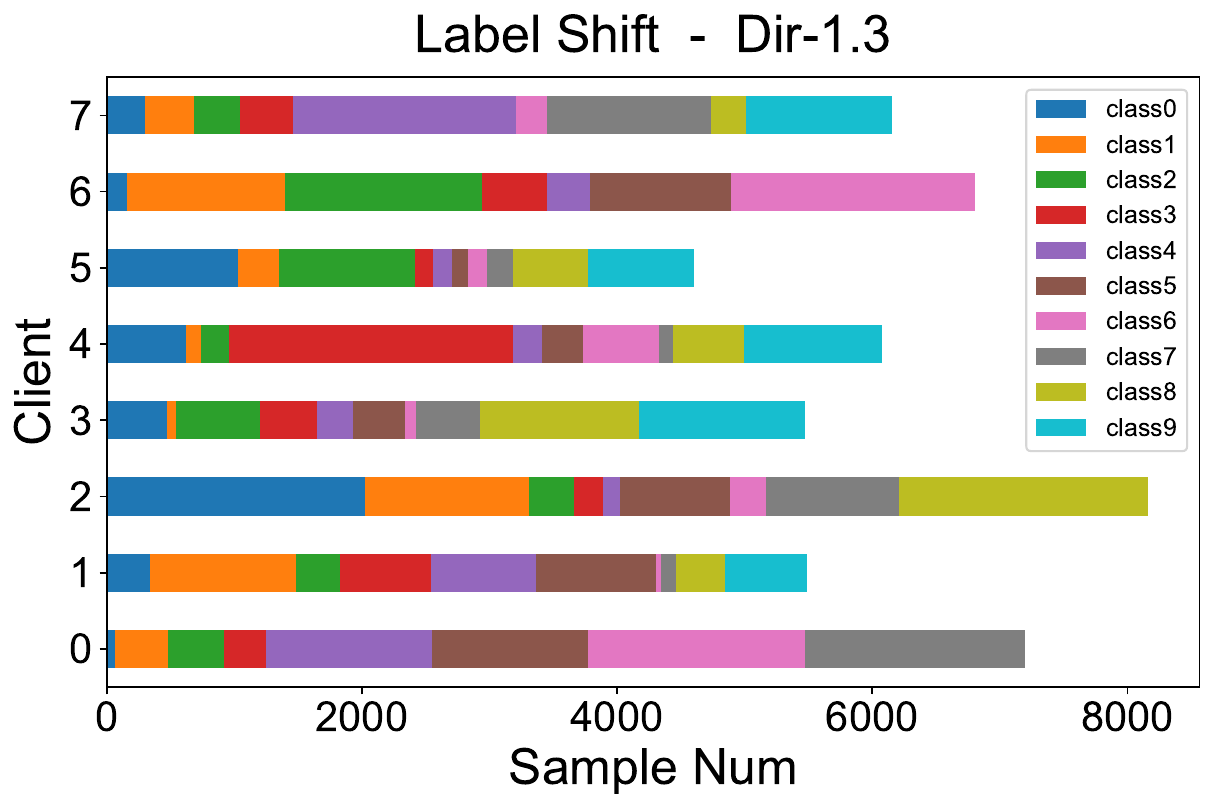}
\includegraphics[width=0.32\linewidth]{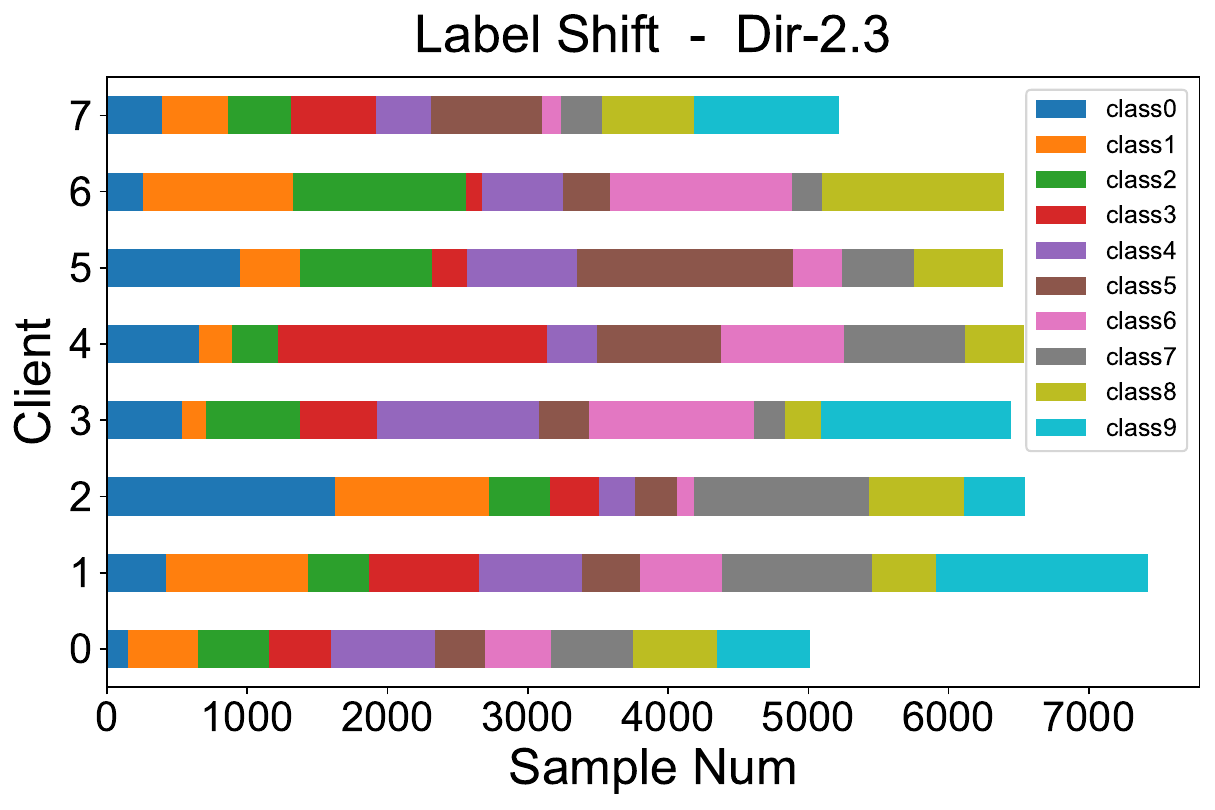}
\caption{Visualization of client data distribution under Dir-$0.3/1.3/2.3$ settings.}
\label{fig:label_shift_intensity_visualization} 
\end{figure}

\begin{table*}[!htbp]
    \centering
    \caption{System accuracy and average accuracy under different Dirichlet parameter $\alpha$ values.}
    \label{tab:alpha_sensitivity}
    \resizebox{\textwidth}{!}{%
    \begin{tabular}{l c c c c c c}
        \toprule
        \multirow{2}{*}{Method} & \multicolumn{3}{c}{System Accuracy} & \multicolumn{3}{c}{Average Accuracy} \\
        \cmidrule(lr){2-4} \cmidrule(lr){5-7}
         & $\alpha=0.3$ & $\alpha=1.3$ & $\alpha=2.3$ & $\alpha=0.3$ & $\alpha=1.3$ & $\alpha=2.3$ \\
        \midrule
        Ditto     & $47.64 \pm 0.25$ & $43.78 \pm 0.23$ & $44.04 \pm 0.20$ & $76.34 \pm 0.11$ & $55.56 \pm 0.18$ & $52.76 \pm 0.18$ \\
        FedRep    & $24.96 \pm 0.19$ & $35.45 \pm 0.22$ & $37.87 \pm 0.22$ & $76.49 \pm 0.15$ & $55.09 \pm 0.17$ & $52.95 \pm 0.16$ \\
        FedBABU   & $57.43 \pm 0.17$ & $54.60 \pm 0.19$ & $54.25 \pm 0.20$ & $78.22 \pm 0.14$ & $61.61 \pm 0.16$ & $59.83 \pm 0.20$ \\
        FedPAC    & $25.14 \pm 0.21$ & $35.53 \pm 0.21$ & $37.84 \pm 0.20$ & $76.53 \pm 0.13$ & $55.08 \pm 0.15$ & $52.95 \pm 0.16$ \\
        FedALA    & $61.33 \pm 0.17$ & $48.47 \pm 0.18$ & $46.68 \pm 0.20$ & $64.35 \pm 2.40$ & $49.14 \pm 0.62$ & $47.29 \pm 0.50$ \\
        FedAS     & $28.76 \pm 0.19$ & $41.43 \pm 0.20$ & $46.08 \pm 0.22$ & $78.69 \pm 0.17$ & $59.83 \pm 0.19$ & $58.17 \pm 0.18$ \\
        ConFREE   & $25.66 \pm 0.22$ & $36.12 \pm 0.21$ & $38.78 \pm 0.23$ & $76.73 \pm 0.16$ & $55.56 \pm 0.18$ & $53.58 \pm 0.15$ \\
        FedAvgFT  & $54.90 \pm 0.22$ & $54.80 \pm 0.18$ & $55.61 \pm 0.19$ & $79.08 \pm 0.11$ & $62.87 \pm 0.16$ & $61.94 \pm 0.17$ \\
        FedProxFT & $55.01 \pm 0.20$ & $54.84 \pm 0.15$ & $55.62 \pm 0.18$ & $79.07 \pm 0.12$ & $62.84 \pm 0.15$ & $61.86 \pm 0.19$ \\
        FedSAMFT  & $55.83 \pm 0.21$ & $49.15 \pm 0.17$ & $47.71 \pm 0.16$ & $75.53 \pm 0.11$ & $55.46 \pm 0.19$ & $53.54 \pm 0.15$ \\
        \midrule
        \ours{} & $\mathbf{63.85 \pm 0.18}$ & $\mathbf{56.83 \pm 0.18}$ & $\mathbf{56.00 \pm 0.23}$ & $\mathbf{80.25 \pm 0.14}$ & $\mathbf{64.04 \pm 0.16}$ & $\mathbf{62.30 \pm 0.15}$ \\
        \bottomrule
    \end{tabular}%
    }
\end{table*}

\subsubsection{Label shift intensity}
\label{app:label_shift_intensity}
To evaluate the robustness of our method under varying degrees of label shift, we compare our method with the baselines across a range of Dirichlet parameters $\alpha\in \{0.3, 1.3, 2.3\}$. Visualizations corresponding to these settings are presented in Fig.~\ref{fig:label_shift_intensity_visualization}. The corresponding results for system accuracy and average accuracy are presented in Tab.~\ref{tab:alpha_sensitivity}.

Consistent with prior work~\citep{xu2023personalized}, we can see that smaller $\alpha$ values—corresponding to higher data heterogeneity—lead to higher average accuracy for all methods. This occurs because each client’s training and testing data are drawn from the same distribution. As $\alpha$ decreases, the local label distributions become increasingly skewed, with some classes receiving negligible probability mass. This effectively reduces the number of classes present on each client, thereby simplifying the local classification problem relative to the balanced case. For system accuracy, we find that this trend persists for our method, as the only additional component is the client-routing step, which does not alter the underlying behavior of local classification. In contrast, methods such as FedRep exhibit increasing system accuracy as $\alpha$ grows (i.e., as the label distributions become more homogeneous). When $\alpha$ is small, the local models become highly personalized and fail to reach a consistent consensus across clients, causing majority voting to misroute queries and thus lowering system accuracy. As $\alpha$ increases, this inconsistency diminishes, and the aggregated routing accuracy improves. These results further confirm that our method remains robust across varying degrees of label shift.

\end{document}